\documentclass[nohyperref]{article}
\usepackage{arxiv}
\usepackage{natbib}
\usepackage{microtype}
\usepackage{graphicx}
\usepackage{subfigure}
\usepackage{booktabs} 

\usepackage{hyperref}
\usepackage{quiver}




\usepackage{amsmath}
\usepackage{amssymb}
\usepackage{mathtools}
\usepackage{amsthm}

\usepackage[capitalize,noabbrev]{cleveref}


\usepackage{amsmath,amsfonts,bm}









\def\eqref#1{equation~\ref{#1}}









\def\1{\bm{1}}








\def\vp{{\bm{p}}}
\def\vq{{\bm{q}}}

\def\vu{{\bm{u}}}
\def\vv{{\bm{v}}}
\def\vw{{\bm{w}}}



\DeclareMathAlphabet{\mathsfit}{\encodingdefault}{\sfdefault}{m}{sl}
\SetMathAlphabet{\mathsfit}{bold}{\encodingdefault}{\sfdefault}{bx}{n}


\def\gP{{\mathcal{P}}}

\def\gW{{\mathcal{W}}}
\def\gX{{\mathcal{X}}}












\DeclareMathOperator*{\argmax}{arg\,max}

\usepackage{hyperref}
\usepackage{url}
\usepackage[utf8]{inputenc} 
\usepackage[T1]{fontenc}    

\usepackage{bbold}
\usepackage{graphics}

\usepackage{pst-node}
\usepackage{tikz-cd}
\usepackage{enumitem}
\usepackage{wrapfig,color}

\usepackage{amsmath}
\usepackage{amssymb}
\usepackage{amsthm}
\usepackage{xspace}
\usepackage[normalem]{ulem}
\usepackage{graphicx}
\usepackage{epsfig}
\usepackage{ifpdf}
\usepackage{soul,hyperref}
\usepackage{latexsym}
\usepackage[mathscr]{euscript}
\usepackage{xspace}
\usepackage{color}
\usepackage{makeidx}
\usepackage{wrapfig,color}
\usepackage{stackrel}
\usepackage{algorithm, algorithmic}
 \usepackage[toc,page]{appendix}

\usepackage[all]{xy}

\usepackage{mathtools}

\usepackage{extarrows}

\usepackage{caption}

\theoremstyle{plain}
\newtheorem{theorem}{Theorem}[section]
\newtheorem{proposition}[theorem]{Proposition}

\newtheorem{corollary}[theorem]{Corollary}
\theoremstyle{definition}
\newtheorem{definition}[theorem]{Definition}

\theoremstyle{remark}
\newtheorem{remark}[theorem]{Remark}

\usepackage[textsize=tiny]{todonotes}
\newcommand{\propositionofref}{}

\newenvironment{propositionof}[1]
 {\renewcommand{\propositionofref}{#1}\zpropositionof}
 {\zpropositionof}

 \newcommand{\theoremofref}{}





\newcommand{\Int}{\mathbb{Z}}
\newcommand{\Real}{\mathbb{R}}
\newcommand{\RR}{\mathbb{R}}

\newcommand{\uu}{\vu}
\newcommand{\ww}{\vw}

\newcommand*{\rom}[1]{\expandafter\@slowromancap\romannumeral #1@}

\newcommand{\rank}{\mathrm{rank}}

\newcommand{\grid}{\text{\textsc{Grid}}}

\newcommand{\di}{\mathrm{d}_\mathrm{I}}

\newcommand{\I}{\mathcal{I}}


\newcommand{\gril}{\textsc{Gril}}

\newcommand{\cancel}[1]

\makeatletter
\newcommand{\colim@}[2]{%
  \vtop{\m@th\ialign{##\cr
    \hfil$#1\operator@font colim$\hfil\cr
    \noalign{\nointerlineskip\kern1.5\ex@}#2\cr
    \noalign{\nointerlineskip\kern-\ex@}\cr}}%
}
\newcommand{\colim}{%
  \mathop{\mathpalette\colim@{\rightarrowfill@\scriptscriptstyle}}\nmlimits@
}
\renewcommand{\varprojlim}{%
  \mathop{\mathpalette\varlim@{\leftarrowfill@\scriptscriptstyle}}\nmlimits@
}
\renewcommand{\varinjlim}{%
  \mathop{\mathpalette\varlim@{\rightarrowfill@\scriptscriptstyle}}\nmlimits@
}
\makeatother



\newcommand\norm[1]{\lVert#1\rVert}

\newcommand{\derosion}{\mathrm{d}_\mathcal{E}}
\newcommand{\dlanscape}{\mathrm{d}_\mathcal{L}}
\newcommand{\RKG}{\mathsf{rk}}
\newcommand{\Nat}{\mathbb{N}}

\definecolor{azure(colorwheel)}{rgb}{0.0, 0.5, 1.0}


\title{GRIL: A $2$-parameter Persistence Based Vectorization for Machine Learning}

\author{
Cheng Xin*\\
\texttt{{xinc}@purdue.edu} \\
\And
Soham Mukherjee*\\
\texttt{{mukher26}@purdue.edu} \\
\And
Shreyas N. Samaga\\
\texttt{{ssamaga}@purdue.edu} \\
\And
Tamal K. Dey \\
\texttt{tamaldey@purdue.edu} 
}

\begin{document}
\maketitle
\def\thefootnote{*}\footnotetext{Both are considered first authors 
}\def\thefootnote{\arabic{footnote}}
\begin{abstract}
$1$-parameter persistent homology, a cornerstone in Topological Data Analysis (TDA), studies the evolution of topological features such as connected components and cycles hidden in data.
 It has been applied to enhance the representation power of deep learning models, such as Graph Neural Networks (GNNs).
 To enrich the representations of topological features,  here we propose to study $2$-parameter persistence modules induced by bi-filtration functions. 
 In order to incorporate these representations into machine learning models, we introduce a novel vector representation called Generalized Rank Invariant Landscape {(\textsc{Gril})} for $2$-parameter persistence modules.  We show that this vector representation is $1$-Lipschitz stable and differentiable with respect to underlying filtration functions and can be easily integrated into machine learning models to augment encoding topological features. We present an algorithm to compute the vector representation  efficiently. We also test our methods on synthetic and benchmark graph datasets, and compare the results with previous vector representations of $1$-parameter and $2$-parameter persistence modules. Further, we augment GNNs with \textsc{Gril} features and observe an increase in performance indicating that \textsc{Gril} can capture additional features enriching GNNs. 
 We make the complete code for the proposed method available at \href{https://github.com/soham0209/mpml-graph}{https://github.com/soham0209/mpml-graph}.
\end{abstract}

\section{Introduction}


Machine learning models such as 
Graph Neural Networks (GNNs)~\citep{gnn,gnn2,kipf2017semi,gin_xu2018how} are well-known successful tools from the geometric deep learning community. 
Some recent research has indicated that the representation power of such models can be augmented by infusing topological information~\citep{Hofer2017DeepLW,GNN_graph_topology19,PersLay,togl}.
One way to do that is by applying \emph{persistent homology}, which is a powerful tool for characterizing the shape of data, rooted in the theory of algebraic topology. It has spawned the flourishing area of Topological Data Analysis.
The classical persistent homology, also known as, $1$-parameter \emph{persistence module}, has attracted plenty of attention from both theory and applications~\citep{edelsbrunner2010computational, TDA_quiver_rep_oudot,TDA_app_carlsson_2021,dey_wang_2022_book}.
In essence, a $1$-parameter persistence homology captures the evolution of some topological information within a topological space $\gX$ along an ascending filtration determined by a scalar function $\gX\to\RR$. It can be losslessly summarized by a complete discrete invariant such as a \emph{persistence diagram}, \emph{rank invariant} or \emph{barcode}.
In recent years, many works have successfully integrated persistence homology with machine learning models
\citep{Multipers_Kernel_Kerber,chen2019topological,PersLay, PLLay, gabrielsson2020topology, hofer2020graph, Zhao2020PersistenceEG, swenson2020persgnn, Carriere_Multipers_Images,Multipers_landscapes, bouritsas2020improving, togl, topologynet, todd, gefl, dowker}.

To further enhance the capacity of persistent homology, it is natural to consider a more general multivariate filtration function $\gX\to\RR^{d}$ for $d\geq 2$ in place of a real valued
function, and represent its topological information by multiparameter persistence modules. However, the structure of multiparameter persistence modules is much more complicated than $1$-parameter persistence modules.
In $1$-parameter case, the modules are completely characterized by what is called \emph{barcode} or \emph{persistence diagram}~\citep{Chazal:2009, InterleavingMulti_Lesnick2015}. Unfortunately,
there is no such discrete complete invariant which can summarize multiparameter persistence modules completely~\citep{Carlsson2009}. Given this limitation, building a useful vector representation from multiparameter persistence modules while capturing as much topological information as possible for machine learning models becomes an important but challenging problem.


To address this challenge, different kinds of vector representations have been proposed for $2$-parameter persistence modules~\citep{Multipers_Kernel_Kerber, Multipers_landscapes,Carriere_Multipers_Images}.
All these works are essentially based on the invariant called fibered (sliced) barcodes
~\citep{rivet}.
However, such representations capture as much topological information as determined by the well-known incomplete summary called rank invariant~\citep{Carlsson2009} which is equivalent to fibered barcodes.


In this paper, we propose a new vector representation to extend its expressive power in terms of capturing topological information from a $2$-parameter persistence
module:

\begin{itemize}
    \item We introduce \emph{Generalized Rank Invariant Landscape} (\textsc{Gril}), a new vector representation encoding richer information beyond fibered barcodes for $2$-parameter persistence modules, based on the idea of \emph{generalized rank invariant}~\citep{Kim2021GeneralPersisDiagramPosets} and its computation by zigzag persistence \cite{DKM22}. The construction of \textsc{Gril} can be viewed as a generalization of persistence landscape~\citep{bubenik2015statistical,Multipers_landscapes}, hence has more discriminating power.
    \item We show that this vector representation \textsc{Gril} is $1$-Lipschitz stable and differentiable with respect to the filtration function $f$, which allows one to build a topological representation as a machine learning model. 
    \item We propose an efficient algorithm to compute (\textsc{Gril}), demonstrate its use on synthetic and benchmark graph datasets, and compare the results with previous vector representations of $1$-parameter and $2$-parameter persistence modules. Specifically, we present results indicating that GNNs may improve when augmented with \gril{} features for graph classification task.
\end{itemize}

\section{Background}
In this section, we start with an overview of single and multiparameter persistence modules followed by formal definitions of basic concepts. Then we provide a high-level idea of how to construct our vector representation {\gril{}}.
For a more comprehensive introduction to persistence modules, we refer the
interested reader to~\citep{edelsbrunner2010computational, TDA_quiver_rep_oudot,TDA_app_carlsson_2021,dey_wang_2022_book}. 


The standard pipeline of $1$-parameter persistence module is as follows: Given a domain of interest $\gX$ (e.g. a topological space, point cloud data, a graph, or a simplicial complex) with a scalar function $f:\gX\to\RR$, one filters the domain $\gX$ by the sublevel sets $\gX_{\alpha}\triangleq\{x\in\gX\mid f(x)\leq \alpha\}$ along with a continuously increasing threshold $\alpha\in\RR$. The collection $\{\gX_{\alpha}\}$, which is called a \emph{filtration}, forms an increasing sequence of subspaces 
$\emptyset=\gX_{-\infty}\subseteq \gX_{\alpha_1}\subseteq\cdots\subseteq\gX_{+\infty}=\gX$. 
Along with the filtration, topological features appear, persist, and disappear over a collection of intervals. We consider $p$th homology groups $H_p(-)$ over a field, say $\Int_2$ ,
of the subspaces in this filtration, which results into a sequence of vector spaces.
These vector spaces are connected by inclusion-induced linear maps 
forming an algebraic structure
$0=H_p(\gX_{-\infty})\to H_p(\gX_{\alpha_1})\to\cdots\to H_p(\gX_{+\infty})$.
(see~\citep{AT_algtop}). 
This algebraic structure, known as $1$-parameter persistence module induced by $f$ and denoted as $M^f$, can be uniquely decomposed into a collection of atomic modules called interval modules, which completely characterizes the topological features in regard to the three behaviors--appearance, persistence, and disappearance of all $p$-dimensional cycles. This unique decomposition of a $1$-parameter persistence module is commonly summarized as a complete discrete invariant, \emph{persistence diagram}~\citep{edelsbrunner2000topological} or \emph{barcode}~\citep{Zomorodian2005}. 
Figure~\ref{fig:filter} (left) shows a filtration of a simplicial complex that induces
a $1$-parameter persistence module and its decomposition into bars.
\begin{figure}[htb]
    \centering
    \includegraphics[width=\columnwidth]{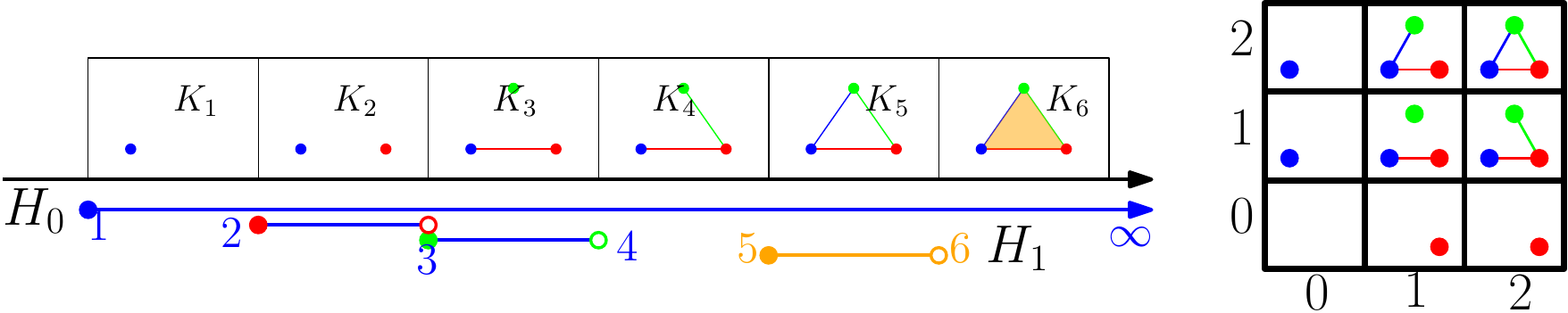}
    \caption{(left) $1$-parameter filtration and bars; (right) a $2$-parameter filtration inducing a $2$-parameter persistence module whose decomposition is not shown.}
    \label{fig:filter}
\end{figure}

Some problems in practice may demand tracking the topological information in a filtration that is not necessarily linear.
For example, in~\citep{Classification_Hepatic_Lesions}, $2$-parameter persistence modules are shown to be better for classifying hepatic lesions compared to $1$-parameter persistence modules.
In~\citep{keller_lesnick_willke_2018, todd}, a virtual screening system based on $2$-parameter persistence modules are shown to be effective for searching new candidate drugs.
In such applications, instead of studying a sequential filtration filtered by a scalar function, one may study a grid-filtration induced by a $\RR^2$-valued bi-filtration function $f:\gX\to \RR^2$ with $\RR^2$ equipped with partial order $\vu\leq\vv: u_1\leq v_1, u_2\leq v_2$; see Figure~\ref{fig:filter}(right) for an example of $2$-parameter filtration. Following a similar pipeline as the $1$-parameter persistence module, one will get a collection of vector spaces $\{M^f_\vu\}_{\vu\in\RR^2}$ indexed by vectors $\vu=(u_1, u_2)\in \RR^2$ and linear maps $\{M^f_{\vu\to\vv}: M^f_{\vu}\to M^f_\vv \mid \vu\leq\vv\in\RR^2\}$ for all comparable $\vu\leq\vv$. The entire structure $M^f$, in analogy to the $1$-parameter case, is called a $2$-parameter persistence module induced from $f$.
Unlike $1$-parameter case, there is no \emph{complete} discrete invariant like persistence diagrams or barcodes
that can losslessly summarize the whole structure of $2$-parameter persistence modules~\citep{Carlsson2009}. 
A good non-complete invariant for $2$-parameter persistence modules should characterize many non-isomorphic topological features, ideally as many as possible. At the same time, it should be stable with respect to small perturbations of filtration functions, which guarantees its important properties of continuity and differentiability for machine learning models.  Therefore, building a good summary in general for $2$-parameter persistence modules which is also applicable to machine learning models is an important and challenging problem.

We now formally define some of the concepts discussed above.

\begin{definition}[Simplicial Complex]\label{app_def:simplicial_comp}
An abstract simplicial complex is a pair $(V,\Sigma)$ where $V$ is a finite set and $\Sigma$ is a collection of non-empty subsets of $V$ such that if $\sigma \in \Sigma$ and if $\tau \subseteq \sigma$ then $\tau \in \Sigma$. 
A topological space $|(V,\Sigma)|$ can be associated with the simplicial complex which can be defined using a bijection $t\colon V \to \{1,2,\hdots,|V|\}$ as the subspace of $\RR^{|V|}$ formed by the union $\bigcup \limits_{\sigma \in \Sigma}h(\sigma)$, where $h(\sigma)$ denotes the convex hull of the set $\{e_{t(s)}\}_{s \in \sigma}$, where $e_i$ denotes the standard basis vector in $\RR^{|V|}$.
\end{definition}

\begin{definition}[Simplicial Filtration]\label{2_sim_filt}
A $d$-parameter \emph{simplicial filtration} over $\RR^d$ for some $d\in \Int_+$ is a collection of simplicial complexes $\{X_{\mathbf{u}}\}_{\mathbf{u} \in \RR^d}$ with inclusion maps $X_{\mathbf{u}} \xhookrightarrow{} X_{\mathbf{v}}$ for $\mathbf{u}\leq \mathbf{v}$, that is, $u_1\leq u_2$ and $v_1 \leq v_2$ where $\mathbf{u}=(u_1,u_2)$ and $\mathbf{v}=(v_1,v_2)$. When $d=2$, it is also called bi-filtration,
\end{definition}

\begin{definition} [Persistence Module]
    A $d$-parameter Persistence Module is a collection of vector spaces $\{X_{\mathbf{u}}\}_{\mathbf{u}\in \RR^d}$ indexed by $\RR^d$,
     together with a collection of linear maps
    $\{M_{\uu\to\vv}:M_{\mathbf{u}} \to M_{\mathbf{v}}\mid \uu\leq \vv \in \RR^d \}$
    such that $M_{\vv\to\ww}=M_{\vv\to\ww } \circ M_{\uu\to\vv}, \forall \uu\leq\vv\leq \ww$. 
\end{definition}
\begin{remark}
    In this paper, we study $1$ and $2$-parameter persistence modules for $d=1,2$. Each $M_\uu$ is the homology vector space of $X_\uu$ in a simplicial (bi-)filtration. And each $M_{\vv\to\ww}$ is the induced linear map from the inclusion $X_\uu\hookrightarrow  X_\vv$. 
\end{remark}

We now define the notion of an interval in $\RR^2$. In the definition, we shall make use of the standard partial order on $\RR^2$, i.e., $\vu \leq \vv$ if $u_1 \leq v_1$ and $u_2 \leq v_2$ for $\vu=(u_1, u_2)$ and $\vv=(v_1, v_2)$.  
\begin{definition}[Interval] A connected subset $\emptyset \neq I \subseteq {\RR^2}$ is an \emph{interval} if $\forall \vu\leq\vv\leq\vw, [\vu\in I, \vw\in I]\implies [\vv\in I]$.

\label{def:interval}
\end{definition}

    

We also give the definition of a zigzag filtration and the zigzag persistence module
induced by it as follows:
\begin{definition}[Zigzag filtration]
A \emph{zigzag filtration} is a sequence of simplicial
complexes 
where both insertions and deletions of simplices are allowed, the possibility of which we indicate with double arrows:
$$X_0\leftrightarrow X_1\leftrightarrow \cdots \leftrightarrow X_n={\mathcal X}.$$
Applying homology functor on such a filtration we get a zigzag persistence module
that is a sequence of vector spaces connected either by forward or backward
linear maps:
$$
H_*(X_0)\leftrightarrow H_*(X_1)\leftrightarrow \cdots\leftrightarrow H_*(X_n).
$$
\label{def:zigzag}
\end{definition}

We end this section by providing an overview of some high-level ideas for constructing our vector representation {\gril{}}.
\paragraph{Overview:}
Our approach computes a \emph{landscape function} over the $2$-parameter domain and then
vectorizes it. At this high level, this is similar to the approach in~\citep{Multipers_landscapes}. However,
the landscape function we construct is much more general and thus potentially has the power of
capturing more topological information. In particular, we use the concept of \emph{generalized
rank invariant} introduced in~\citep{Kim2021GeneralPersisDiagramPosets}, which indeed
generalizes the traditional rank invariant used in~\citep{Multipers_landscapes}. As opposed to simple
rank invariant which is defined over rectangles, generalized ranks are defined over their
generalizations called \emph{intervals}.
We define it more formally in section~\ref{sec:GRIL} below. 

One difficulty facing the use of the generalized ranks in TDA was that its efficient computation was not known. Recently, in~\citep{DKM22}, the authors showed that 
generalized ranks
for intervals in $2$-parameter persistence modules can be obtained by considering 
a persistence module supported on a linear poset induced by the boundary of the interval in question. 
However, this linear poset is not totally ordered as in $1$-parameter persistence, and
thus gives rise to what is called \emph{zigzag persistence}~\citep{zigzag_carlsson_deSilva} where the inclusions
can both be in forward and backward directions unlike traditional $1$-parameter persistence
where they are only in forward directions;
With this result, computing generalized ranks efficiently boils down to computing zigzag persistence efficiently.
For this purpose, we use a recently discovered fast zigzag algorithm and its efficient implementation ~\citep{dey2022fast}\footnote{\url{https://github.com/taohou01/fzz}}.

Our method samples a subset of grid points from the $2$-parameter grid spanned by a given bi-filtration function, and
computes the landscape function values (Definition~\ref{Defn:GRIL}) at those points based on generalized ranks. For this,
the algorithm considers an expanding sequence of intervals which we call \emph{worms} centered at each point $\vp$ and computes generalized rank over them
to determine the `width' of the maximal worm sustaining a chosen rank. This maximization
is achieved by a binary search over the sequence of worms centering $\vp$; section~\ref{sec:algs} describes this procedure. The widths, thus computed for each sample point, constitute the landscape function values which become the basis for our vector representation.

\section{Generalized Rank Invariant Landscape}
\label{sec:GRIL}

In this section, we introduce Generalized Rank Invariant Landscape, abbreviated as
 \gril{}, a stable and differentiable vector representation of $2$-parameter persistence modules. 

Let $M=M^f$ be a $2$-parameter persistence module induced by a filtration function $f$.
The restriction of $M$ to an interval $I$, denoted as $M|_I$, is the collection of vector spaces $\{M_\vu\mid \vu\in I\}$ along with linear maps $\{M_{\vu\to\vv}\mid \vu\leq\vv\in I)\}$. 
One can define the generalized rank of $M|_I$ as the rank of the canonical linear map from limit $\varprojlim M|_I$ to colimit $\varinjlim M|_I$ of $M|_I$:
\begin{equation*}
    \RKG^M(I) \triangleq \rank[\varprojlim M|_I \rightarrow     \varinjlim M|_I  ]
\end{equation*} 
A formal explanation of limit and colimit is beyond the scope
of this article;
we refer readers to~\citep{Saunders_Maclane_Cat_Theory} for their definitions and also the construction of the canonical limit-to-colimit map in category theory.
Intuitively, $\RKG^M(I)$ captures the number of \emph{independent}
topological features encoded in $M$ with the support over the entire interval $I$.
Specially, when $I=[\vu, \vv]\triangleq\{\vw\in\RR^2\mid \vu\leq\vw\leq\vv\}$ is a rectangle, $\varprojlim M|_I=M_\uu$ and $\varinjlim M|_I=M_\vv$. Then $\RKG^M(I)$ equals the traditional rank of the linear map $M_{\vu\to\vv}$. 
~\\
\begin{remark}
An interesting property of the generalized rank invariant is that 
its value over a larger interval is less than or equal to its value over any interval contained inside the larger interval. Formally, $I\subseteq J \implies  \RKG^M(I)\geq \RKG^M(J)$.
We implicitly use this \emph{monotone} property in the definition of \gril{}.
\label{rem:monotone}
\end{remark}

The basic idea of {\sc Gril} is to consider a collection of generalized ranks $\{\RKG^M(I)\}_{I\in\gW}$ over some covering set $\gW$ on $\Real^2$,
which is called a \emph{generalized rank invariant}
of $M$ over $\gW$. 


Let $\boxed{\vp}_\delta\triangleq\{\vw: \|\vp-\vw\|_\infty\leq \delta\}$ be 
the $\delta$-square centered at $\vp$ with side $2\delta$.
For given $\vp\in\RR^2, \ell\geq 1, \delta > 0$, we define an \emph{$\ell$-worm} $\boxed{\vp}_\delta^\ell$ to be the union over all $\delta$-squares $\boxed{\vq}_\delta$ centered at some point $\vq$ on the off-diagonal line segment $\vp+\alpha\cdot(1, -1)$ with $|\alpha|\leq (\ell-1)\delta$.   
See Figure~\ref{fig:lworm_examples} for an illustration.
\begin{figure}[!ht]
    \centering
    \includegraphics[width=0.8\columnwidth]{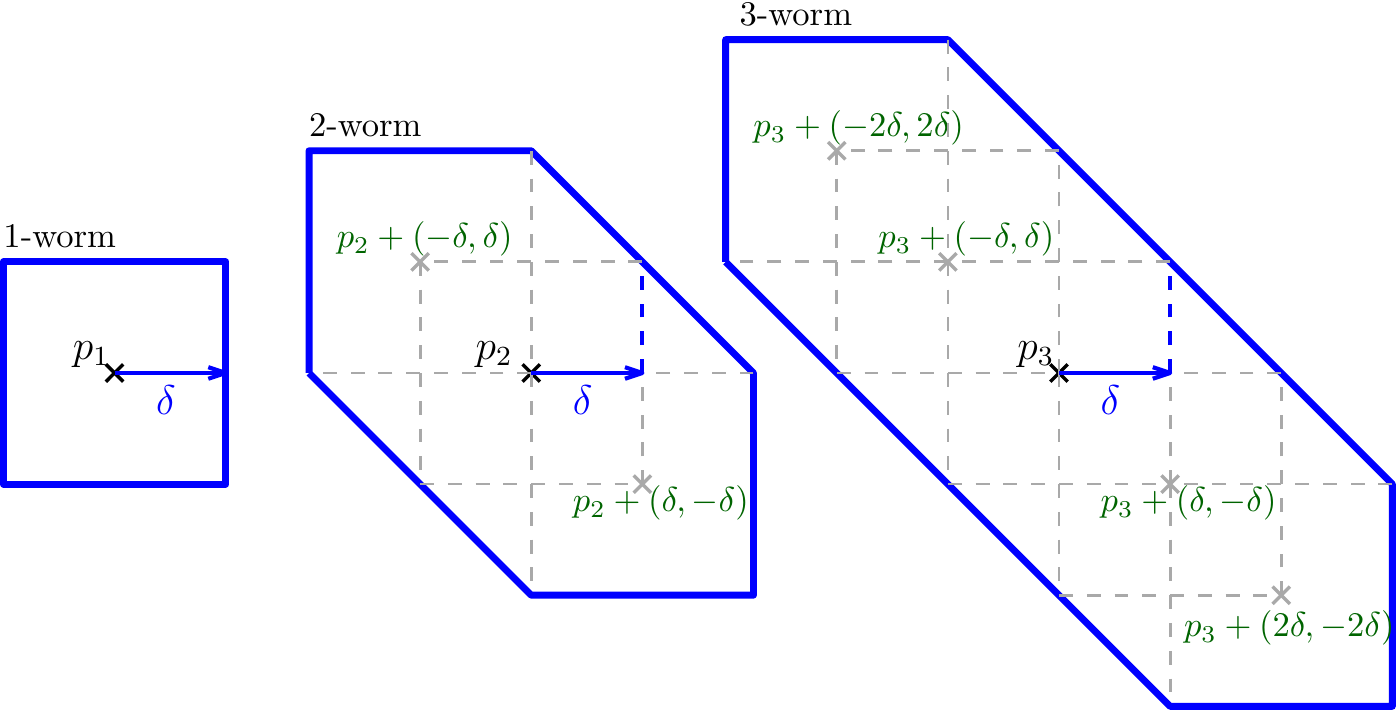}
    \caption{Examples of three $\ell$-worms with $\ell=1,2,3$.}
    \label{fig:lworm_examples}
\end{figure}

\begin{figure*}[!hbt]
    \centering
    \includegraphics[width=\textwidth, page=3]{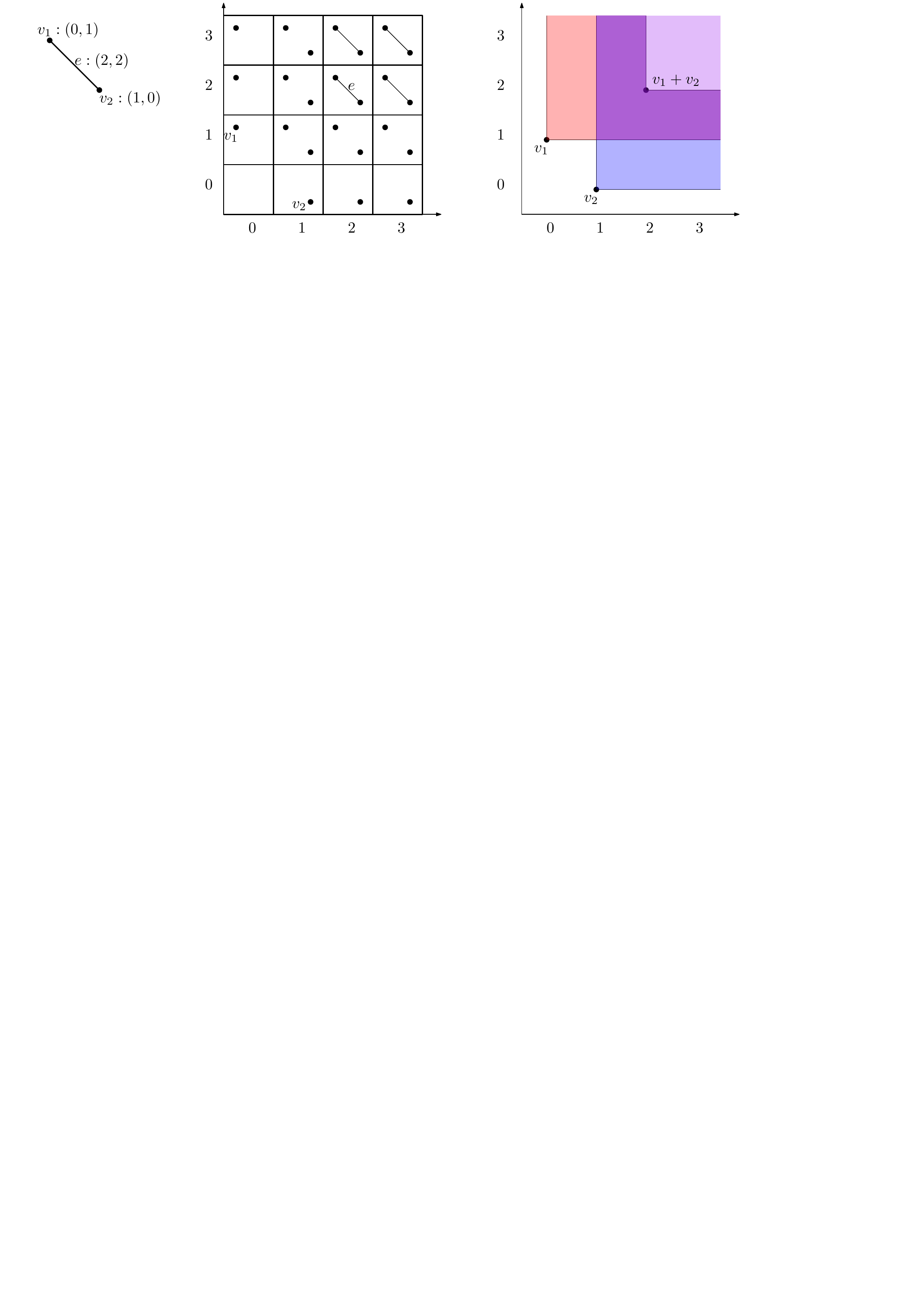}
    \caption{The construction starts from a simplicial complex with a bi-filtration function as shown on the top left. The simplicial complex consists of two vertices connected by one edge. Based on the bi-filtration, a simplicial bi-filtration can be defined as shown on the bottom left. On the mid bottom, a $2$-parameter persistence module is induced from the above simplicial filtration. If we check the dimensions of the vector spaces on all points of the plane, there are  $1$-dimensional vector spaces on red, blue and light purple regions. On the $L$-shaped dark purple region, the vector spaces have dimension $2$. For this $2$-parameter persistence module, we calculate $\lambda^{M^f}(\vp,k,\ell)$ for all tuples $(\vp,k,\ell)\in \gP\times K\times L$ to get our \gril{} vector representation. By Definition~\ref{Defn:GRIL} the value $\lambda^{M^f}(\vp,k,\ell)$ corresponds to the width of the maximal $\ell$-worm on which the generalized rank is at least $k$. On the bottom right, the interval in red is the maximal $2$-worm for $\lambda^{M^f}(\vp_1,k=1,\ell=2)$.
    The green interval is the maximal $2$-worm for $\lambda^{M^f}(\vp_2,k=2,\ell=2)$.
    The yellow square is the maximal $1$-worm for  $\lambda^{M^f}(\vp_3, k=1,\ell=1)$, and the blue interval is the maximal $3$-worm for $\lambda^{M^f}(\vp_3, k=1,\ell=3)$.
    Finally, on the top right, we have our {\gril{}} vector representation $\lambda^{M^f}$ which is a collection of vectors. Each vector corresponding to a different $\ell$ and $k$ consists of values as the width of maximal worms at each center point $\vp$. As an example, the blue one on the last vector at position $p_3$ has value $\delta$ which is the width of the blue worm.
    }
    \label{fig:working_exp}
\end{figure*}

Formally, 
\begin{equation*}
\boxed{\vp}^\ell_\delta\triangleq\bigcup_{\substack{\vq=\vp+(\alpha, -\alpha)\\ |\alpha|\leq (l-1)\delta}}{\boxed{\vq}_\delta}
\end{equation*}
We call $\vp$ the \emph{center point} and $\delta$ the \emph{width} of the $\ell$-worm $\boxed{\vp}^\ell_\delta$.
As a special case,
when $\ell=1$, $\boxed{\vp}^1_\delta=\boxed{\vp}_\delta$ is just the
$\delta$-square with side $2\delta$. 

We choose $\gW$ to be a set of \emph{Worms} defined as follows:
\begin{equation*}
    \gW\triangleq\left \{W=\boxed{\vp}_\delta^\ell \mid \delta>0, \ell\geq 1, \vp\in\RR^2 \right \} 
\end{equation*}

Now we are ready to define the main construct in this
paper which uses the monotone property of generalized rank
mentioned in Remark~\ref{rem:monotone}.
\begin{definition}[Generalized Rank Invariant Landscape (\textsc{Gril})]\label{Defn:GRIL}
    

For a persistence module $M$, the \emph{Generalized Rank Invariant Landscape (\textsc{Gril})} of $M$ is a function  $\lambda^M:\RR^2\times\Nat_+\times\Nat_+\rightarrow \RR$ defined as
\begin{equation}
\lambda^M(\vp, k, \ell)\triangleq \sup_{\delta\geq 0} \{\RKG^M({\boxed{\vp}^\ell_\delta)}\geq k\}. 
\label{eq:gril}
\end{equation}
\end{definition}
We can see from the definition that given a persistence module $M$, a point $\vp$, a rank $k$ and $\ell$, the value of \gril{} $(\lambda^M(\vp, k, \ell))$ is, in essence, the width $\delta$ of the "maximal" $\ell$-worm $W=\boxed{\vp}_\delta^{\ell}$ centered at $\vp$ such that the value of the generalized rank over $W$ is greater than or equal to $k$. See Figure \ref{fig:working_exp} bottom right for some examples of maximal worms. 

It turns out that, \gril{} as an invariant is equivalent to the generalized rank invariant over $\gW$.

\begin{proposition}\label{prop:equivalance}
\gril{} is equivalent to the generalized rank invariant over $\mathcal{W}$. Here the equivalence means bijective reconstruction from each other.
\end{proposition}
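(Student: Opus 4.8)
The plan is to exhibit two maps — one sending the generalized rank invariant $\{\RKG^M(W)\}_{W\in\gW}$ to the function $\lambda^M$, and one going back — and to show that they are mutually inverse. The forward map is immediate: Definition~\ref{Defn:GRIL} expresses $\lambda^M(\vp,k,\ell)$ solely in terms of the values $\RKG^M(\boxed{\vp}^\ell_\delta)$ as $\delta$ ranges over $[0,\infty)$, so $\lambda^M$ is manifestly a function of the generalized rank invariant over $\gW$. The real content is the reverse reconstruction, recovering every value $\RKG^M(W)$ from $\lambda^M$.

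First I would establish the monotonicity lemma underlying the whole argument: for a fixed center $\vp$ and fixed $\ell$, the assignment $\delta\mapsto\RKG^M(\boxed{\vp}^\ell_\delta)$ is non-increasing. This follows by checking the containment $\boxed{\vp}^\ell_{\delta_1}\subseteq\boxed{\vp}^\ell_{\delta_2}$ whenever $\delta_1\le\delta_2$ — each constituent $\delta$-square grows and the admissible range $|\alpha|\le(\ell-1)\delta$ of off-diagonal centers widens — and then invoking the monotone property of generalized rank from Remark~\ref{rem:monotone}. Thus $g_{\vp,\ell}(\delta)\triangleq\RKG^M(\boxed{\vp}^\ell_\delta)$ is a non-increasing, $\Nat$-valued function of $\delta$.

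Given this, the reconstruction is the standard duality between a non-increasing step function and its generalized inverse. By construction $\lambda^M(\vp,k,\ell)=\sup\{\delta : g_{\vp,\ell}(\delta)\ge k\}$ is exactly the $k$-th super-level threshold of $g_{\vp,\ell}$, and the family $\{\lambda^M(\vp,k,\ell)\}_{k\ge 1}$ records all of these thresholds. I would then recover the rank by the dual formula $\RKG^M(\boxed{\vp}^\ell_\delta)=\max\{k\in\Nat_+:\lambda^M(\vp,k,\ell)\ge\delta\}$, interpreted as $0$ when no such $k$ exists, and verify that composing the two maps returns the original data in both directions, which establishes the claimed bijection.

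The step I expect to be the main obstacle is the boundary behavior of the supremum, i.e. verifying the equivalence $\RKG^M(\boxed{\vp}^\ell_\delta)\ge k \iff \delta\le\lambda^M(\vp,k,\ell)$ at the critical value $\delta=\lambda^M(\vp,k,\ell)$. This requires that the supremum defining $\lambda^M$ is attained — equivalently, that each super-level set $\{\delta:g_{\vp,\ell}(\delta)\ge k\}$ is closed, so that $g_{\vp,\ell}$ is upper semicontinuous in $\delta$. I would justify this by appealing to the constructibility of $M$ arising from a finite simplicial bi-filtration: the rank function $g_{\vp,\ell}$ is then piecewise constant with finitely many jumps, so every super-level threshold is realized and the inverse formula recovers $g_{\vp,\ell}$ exactly. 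Fixing this one-sided continuity convention consistently on both sides is precisely what makes the two reconstructions genuinely inverse to each other.
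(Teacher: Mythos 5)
Your two maps are exactly the ones the paper uses: the forward direction is read off from Definition~\ref{Defn:GRIL}, and your inverse formula $\RKG^M(\boxed{\vp}_\delta^\ell)=\max\{k:\lambda^M(\vp,k,\ell)\geq\delta\}$ is precisely the paper's $\argmax_{k} \{\lambda(\vp, k, \ell)\geq \delta \}$; your monotonicity lemma is Remark~\ref{rem:monotone} applied to the nesting of worms, which the paper invokes implicitly. So structurally your proof coincides with the paper's, and you spell out what the paper dismisses as ``not hard to check.''

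Where you go beyond the paper is in isolating the boundary case $\delta=\lambda^M(\vp,k,\ell)$, and there your proposed fix does not work: piecewise constancy of $g_{\vp,\ell}$ with finitely many jumps does \emph{not} imply that its super-level sets are closed, because $g_{\vp,\ell}$ may jump down \emph{at} the critical width rather than just after it, and this genuinely happens for modules arising from the paper's sublevel bi-filtrations. Concretely, take vertices $a,b$ with $f(a)=(0,0)$, $f(b)=(-10,-10)$ and an edge $ab$ with $f(ab)=(1,1)$. Then $H_0$ gives $M^f\cong k_{\{\vw\geq(-10,-10)\}}\oplus k_J$ with $J=\{\vw\geq(0,0)\}\setminus\{\vw\geq(1,1)\}$. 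For $\vp=(0.5,0.5)$ and $\ell=1$, the square $\boxed{\vp}_\delta$ lies in $J$ iff $\delta<0.5$ (the closed worm must avoid the closed death region above $(1,1)$), so $g_{\vp,1}=2$ on $[0,0.5)$ while $g_{\vp,1}(0.5)=1$: the supremum $\lambda^{M^f}(\vp,2,1)=0.5$ is not attained, and the inverse formula returns $2$ at $\delta=0.5$ where the true generalized rank is $1$. Upper semicontinuity in $\delta$ is exactly what fails on such ``death-bounded'' worms; constructibility tells you there are finitely many jump values but nothing about which side of each jump the value sits on. To make the reconstruction literally correct one must either exclude the (finitely many, per center) critical widths — so the two invariants determine each other away from a measure-zero set of $\delta$ — or regularize, e.g.\ replace $g_{\vp,\ell}(\delta)$ by its left limit $\lim_{\epsilon\to 0^+}g_{\vp,\ell}(\delta-\epsilon)$, which is the quantity $\lambda^M$ actually encodes and for which your dual formulas are exact inverses. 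In fairness, the paper's own proof is silent on this point, so your write-up is more honest about where the difficulty lies, but the justification you give for attainment is a non sequitur and the step as stated is false.
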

\begin{proof}\label{proof:equivalance}
    Constructing \gril{} from generalized rank invariant over $\gW$ is immediate from the definition of \gril{}.
    
    On the other direction, for any $\vp, \delta, \ell$, the generalized rank $\RKG^M(\boxed{\vp}_\delta^\ell)$ can be reconstructed by \gril{} as follows: 
    \begin{equation}
    \RKG^M(\boxed{\vp}_\delta^\ell)=\argmax_{k} \{\lambda(\vp, k, \ell)\geq \delta \}
    \end{equation}
    It is not hard to check that, this construction, combined with the construction of persistence landscape, gives a bijective mapping between generalized rank invariants over $\gW$ and \gril{}s. 
\end{proof}
    


See Figure~\ref{fig:working_exp} for an illustration of the overall pipeline of our construction of $\lambda^M$ starting from a filtration function on a simplicial complex.
 Figure~\ref{fig:MP_Signatures} shows the discriminating power of \gril{} where we see that \gril{} can differentiate between shapes that are topologically non-equivalent.

\begin{figure*}[ht]
    \centering
    \includegraphics[width=\textwidth]{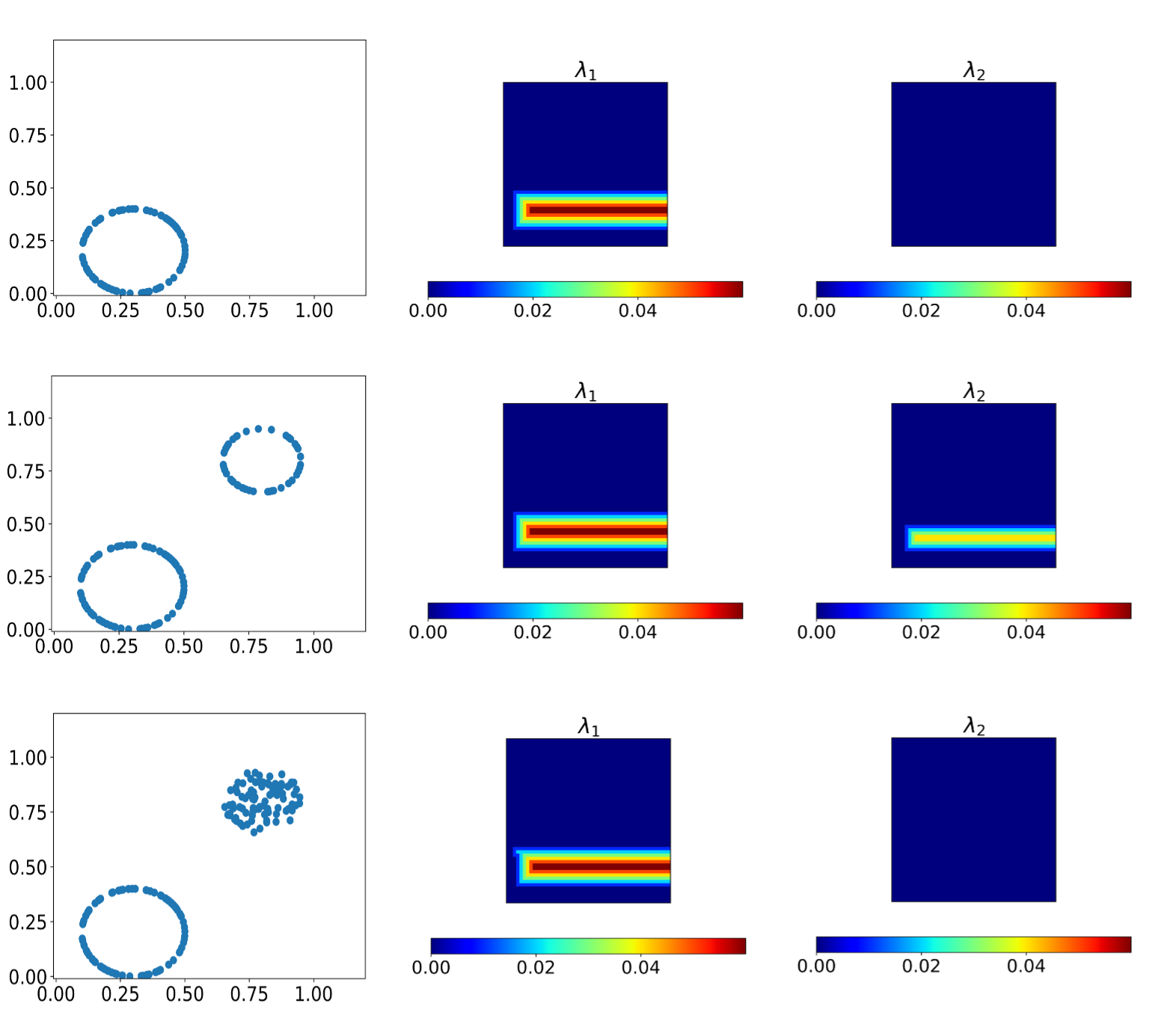}
    \caption{\gril{} as a topological discriminator: each row shows a point cloud $P$, \gril{} value heatmap for ranks $k=1$ and $k=2$ in homology of degree $1$ named as $\lambda_1$ and $\lambda_2$ respectively. First Betti number ($\beta_1$) of a circle is $1$ which is reflected in $\lambda_1$ being non-zero. $\beta_1$ for two circles is $2$ which is reflected in both $\lambda_1$ and $\lambda_2$ being non-zero. Similarly, $\beta_1$ of a circle and disk together is $1$ which is reflected in $\lambda_1$ being non-zero but $\lambda_2$ being zero for this point cloud.}
    \label{fig:MP_Signatures}
\end{figure*}

\subsection{Stability of GRIL}

\label{sec:stability}

An important property of \textsc{Gril} is its \emph{stability property}
which makes it immune to small perturbations of the input bi-filtration while still retaining the ability to characterize topologies. 
We will show \gril{} is stable with respect to input filtrations. 
We begin by defining two (pseudo)metrics,  \emph{interleaving distance} and \emph{erosion distance}, which we will use to characterize the stability property.
\begin{definition}[Morphism]\label{def:morphism}
Given two persistence modules $M$ and $N$, a \emph{morphism} from $M$ to $N$, $f:M\to N$, is a collection of linear maps $\{f_\uu: M_\uu\to N_\uu\}_{\uu\in \Real^2}$ such that $f_\uu\circ N_{\uu\to\vv}=M_{\uu\to\vv}\circ f_\vv, \forall \uu\leq \vv$. Essentially, the following diagram commutes.
\[\begin{tikzcd}
  {M_{\bm{u}}} && {M_{\bm{v}}} \\
  \\
  {N_{\bm{u}}} && {N_{\bm{v}}}
  \arrow["{M_{\bm{u} \rightarrow \bm{v}}}"{description}, from=1-1, to=1-3]
  \arrow["{N_{\bm{u} \rightarrow \bm{v}}}"{description}, from=3-1, to=3-3]
  \arrow["{f_{\bm{u}}}"{description}, from=1-1, to=3-1]
  \arrow["{f_{\bm{v}}}"{description}, from=1-3, to=3-3]
\end{tikzcd}\]
\end{definition}

\begin{definition}[Shift module]\label{def:shift_module}
Given a persistence module $M$ and $\varepsilon\in \Real$, we define the \emph{shift module} $M^{\leftarrow \varepsilon}$ through $M^{\leftarrow \varepsilon}_\uu \triangleq M_{\uu+\varepsilon}$ and $M^{\leftarrow \varepsilon}_{\uu\to\vv}\triangleq M_{\uu+\varepsilon \to \vv+\varepsilon}$. Here $\uu+\varepsilon=(\uu_1+\varepsilon, \uu_2+\varepsilon)$.
\end{definition}

The definition of interleaving distance is based on interleaving morphisms defined as follows:
\begin{definition}[$\varepsilon$-interleaving]\label{def:interleaving} 

For a pair of persistence modules $M, N$ and some $\varepsilon \geq 0$, an $\varepsilon$-\emph{interleaving} between $M$ and $N$ is a pair of morphisms $\phi:M\to N^{\leftarrow \varepsilon}$ and $\psi:N\to M^{\leftarrow \varepsilon}$ such that  $\forall \uu\in \mathbb{R}^2, M_{\uu \rightarrow \uu+2\varepsilon} =  \psi_{\uu+\varepsilon} \circ \phi_{\uu}$ and
$ N_{\uu \rightarrow \uu+2\varepsilon} =  \phi_{\uu+\varepsilon} \circ \psi_{\uu}$.
Essentially, the following diagram commutes.
If such interleaving exists, we say $M$ and $N$ are $\varepsilon$-interleaved.
\[\begin{tikzcd}
  \bullet & \bullet & {M_{\bm{u}}} && {M_{\bm{u}+\varepsilon}} && {M_{\bm{u}+2\varepsilon}} & \bullet & \bullet \\
  \\
  \bullet & \bullet & {N_{\bm{u}}} && {N_{\bm{u}+\varepsilon}} && {N_{\bm{u}+2\varepsilon}} & \bullet & \bullet
  \arrow[from=1-3, to=1-5]
  \arrow[from=1-5, to=1-7]
  \arrow[from=3-3, to=3-5]
  \arrow[from=3-5, to=3-7]
  \arrow[from=1-2, to=1-3]
  \arrow[from=1-7, to=1-8]
  \arrow[from=3-2, to=3-3]
  \arrow[from=1-1, to=1-2]
  \arrow[from=3-1, to=3-2]
  \arrow[from=3-7, to=3-8]
  \arrow[from=3-8, to=3-9]
  \arrow[from=1-8, to=1-9]
  \arrow["{\phi_{\bm{u}}}"{description, pos=0.7}, from=1-3, to=3-5]
  \arrow["{\psi_{\bm{u}+\varepsilon}}"{description, pos=0.7}, from=3-5, to=1-7]
  \arrow["{\phi_{\bm{u}+\varepsilon}}"{description, pos=0.7}, from=1-5, to=3-7]
  \arrow["{\psi_{\bm{u}}}"{description, pos=0.7}, from=3-3, to=1-5]
  \arrow["{M_{\bm{u} \rightarrow \bm{u}+2\varepsilon}}"{description}, curve={height=-18pt}, from=1-3, to=1-7]
  \arrow["{M_{\bm{u} \rightarrow \bm{u}+2\varepsilon}}"{description}, curve={height=18pt}, from=3-3, to=3-7]
\end{tikzcd}\]
\end{definition}

\begin{definition}[Interleaving Distance~\citep{Chazal:2009,InterleavingMulti_Lesnick2015}]\label{def:interleaving_distance}
For two persistence modules $M$ and $N$, \emph{interleaving distance} is defined as 
\[
\di(M, N)\triangleq \inf_{\varepsilon\geq 0}\{M \textrm{ and } N \textrm{ are } \varepsilon\textrm{-interleaved}\}.
\]
\end{definition}


We shall now look at a property of \textsc{Gril} that will help in proving the stability.

\begin{definition}
Given any interval $I$ and $\varepsilon\geq 0$, let $I^{+\varepsilon}$ be the \emph{$\varepsilon$-extension} of $I$ defined as:
\begin{equation}
    I^{+\varepsilon}\triangleq \bigcup_{\vp\in I} \boxed{\vp}_\varepsilon
\end{equation}
where $\boxed{\vp}_\varepsilon\triangleq\{\vq: ||\vp-\vq||_\infty \leq \varepsilon\}$ is the $\infty$-norm $\varepsilon$-neighbourhood of $x$.
\end{definition}
It follows from the definitions that:
\begin{proposition}
    $\left (\boxed{\vp}_\delta^\ell \right )^{+\varepsilon} \subseteq \boxed{\vp}_{\delta+\varepsilon}^\ell$.    
\end{proposition}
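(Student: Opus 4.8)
The plan is to prove the containment pointwise by unwrapping both set-builder definitions and invoking the triangle inequality for the $\infty$-norm, keeping the diagonal offset of the worm fixed throughout.

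First I would take an arbitrary point $\vy \in \left(\boxed{\vp}_\delta^\ell\right)^{+\varepsilon}$. By the definition of the $\varepsilon$-extension, there is some $\vx \in \boxed{\vp}_\delta^\ell$ with $\|\vx - \vy\|_\infty \le \varepsilon$. Next, unpacking the definition of the $\ell$-worm, the point $\vx$ lies in some $\delta$-square $\boxed{\vq}_\delta$ whose center is $\vq = \vp + (\alpha, -\alpha)$ for some $\alpha$ with $|\alpha| \le (\ell-1)\delta$; that is, $\|\vx - \vq\|_\infty \le \delta$.

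The key step is then to reuse the \emph{same} center $\vq$ (equivalently, the same offset $\alpha$) for the wider worm. By the triangle inequality for $\|\cdot\|_\infty$,
\[
\|\vy - \vq\|_\infty \le \|\vy - \vx\|_\infty + \|\vx - \vq\|_\infty \le \varepsilon + \delta,
\]
so $\vy \in \boxed{\vq}_{\delta+\varepsilon}$. It remains only to verify that this square is among those comprising $\boxed{\vp}_{\delta+\varepsilon}^\ell$, i.e., that the offset $\alpha$ meets the worm's admissibility bound at width $\delta + \varepsilon$. This is immediate since $\varepsilon \ge 0$ and $\ell \ge 1$ yield $|\alpha| \le (\ell-1)\delta \le (\ell-1)(\delta+\varepsilon)$. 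Hence $\vy \in \boxed{\vp}_{\delta+\varepsilon}^\ell$, and as $\vy$ was arbitrary the claimed inclusion follows.

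I do not anticipate a genuine obstacle here: the statement is essentially triangle-inequality bookkeeping. The only point requiring a moment's care is the decision to retain the same diagonal offset $\alpha$ rather than trying to re-center, which is precisely what makes both the norm bound and the offset bound go through cleanly; attempting instead to optimize over a fresh offset would only complicate the argument without changing the conclusion.
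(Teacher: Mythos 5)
Your proof is correct, and it fills in exactly the argument the paper leaves implicit: the paper states this proposition with no proof beyond the remark that it ``follows from the definitions,'' and your unpacking---fix the same diagonal offset $\alpha$, apply the triangle inequality for $\|\cdot\|_\infty$, and note $(\ell-1)\delta \le (\ell-1)(\delta+\varepsilon)$---is the intended routine verification. Nothing further is needed.
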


The erosion distance is a pesudometric on persistence modules defined by comparing modules' generalized rank invariants over all intervals in $\RR^2$ as follows: 
\begin{definition}[Erosion Distance~\citep{Patel_Gen_Pers_Diag,Kim2021GeneralPersisDiagramPosets}]
        Let $\mathbf{Int}(\RR^2)$ be the collection of all intervals in $\RR^2$. Let $M$ and $N$ be two persistence modules.
        The {\emph{erosion distance}} is defined as 
        \begin{equation*}
            \derosion(M, N)\triangleq \inf_{ \varepsilon\geq0}\{  \forall I \in\mathbf{Int}(\RR^2), \RKG^M(I)\geq \RKG^N(I^{+\varepsilon}) \text{ and } \RKG^N(I)\geq \RKG^M(I^{+\varepsilon}) \}.
        \end{equation*}
\end{definition}


    In order to better analyze the stability property of persistence landscape, we define a distance in a similar flavor as erosion distance 
    over our collection of worms $\gW$.
    \begin{definition}
    For $\mathcal{W}\triangleq\left \{\boxed{\vp}_\delta^\ell \mid \delta>0, l\in \Nat_+, \vp\in\RR^2 \right \}$, 
    define a distance $\derosion^\mathcal{W}$ as follows:
    \begin{flalign*}
         \derosion^\mathcal{W}(M, N)\triangleq     \inf_{\varepsilon\geq 0}  \{  \forall \boxed{\vp}_\delta^\ell\in\mathcal{W},  [ \RKG^M\left (\boxed{\vp}_\delta^\ell\right )\geq \RKG^N\left (\boxed{\vp}^\ell_{\varepsilon+\delta}\right ) \text{ and }  \RKG^N\left (\boxed{\vp}_\delta^l\right )\geq \RKG^M\left (\boxed{\vp}^\ell_{\varepsilon+\delta}\right )]  \}.
    \end{flalign*}
    \end{definition}
    
Now we are ready to state formally the stability property of {\gril}.
\begin{definition}
    For persistence module $M, N$ with {\gril{}s} $\lambda^M, \lambda^N$, define
    \begin{equation*}
        d_\mathcal{L}(M, N)\triangleq  ||\lambda^M-\lambda^N||_{\infty}.
    \end{equation*}
\end{definition}
    
\begin{proposition}
        $\dlanscape=\derosion^\mathcal{W}\leq \derosion$, where $\derosion$ is the erosion distance.    
\end{proposition}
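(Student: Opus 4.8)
The plan is to prove the two assertions separately: the equality $\dlanscape=\derosion^\mathcal{W}$ and the inequality $\derosion^\mathcal{W}\le\derosion$. The inequality is the easier of the two, so I would dispatch it first. Fix any $\varepsilon\ge 0$ admissible in the definition of $\derosion(M,N)$, i.e.\ $\RKG^M(I)\ge\RKG^N(I^{+\varepsilon})$ and its mirror hold for every interval $I$. Each worm $\boxed{\vp}_\delta^\ell$ is an interval (as is implicit in writing $\RKG^M(\boxed{\vp}_\delta^\ell)$ at all), so I may instantiate the $\derosion$-condition at $I=\boxed{\vp}_\delta^\ell$. Combining this with the containment $\left(\boxed{\vp}_\delta^\ell\right)^{+\varepsilon}\subseteq\boxed{\vp}_{\delta+\varepsilon}^\ell$ (the preceding Proposition) and the monotonicity of generalized rank from Remark~\ref{rem:monotone} ($I\subseteq J\Rightarrow\RKG(I)\ge\RKG(J)$), I obtain $\RKG^M(\boxed{\vp}_\delta^\ell)\ge\RKG^N\!\left(\left(\boxed{\vp}_\delta^\ell\right)^{+\varepsilon}\right)\ge\RKG^N(\boxed{\vp}_{\delta+\varepsilon}^\ell)$, and symmetrically. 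Hence every $\varepsilon$ admissible for $\derosion$ is admissible for $\derosion^\mathcal{W}$, so the latter infimum is over a larger set and $\derosion^\mathcal{W}\le\derosion$.

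For the equality I would fix a center $\vp$ and an integer $\ell$ and regard $r^M(\delta)\triangleq\RKG^M(\boxed{\vp}_\delta^\ell)$ (and likewise $r^N$) as a non-increasing integer-valued function of $\delta\ge 0$, again by Remark~\ref{rem:monotone}, since the worms grow with $\delta$. The key observation is that $\lambda^M(\vp,\cdot,\ell)$ is exactly the generalized inverse of $r^M$: by Definition~\ref{Defn:GRIL}, $\lambda^M(\vp,k,\ell)=\sup\{\delta:r^M(\delta)\ge k\}$. The core of the proof is the following duality, established for each fixed $(\vp,\ell)$: the inequality $r^M(\delta)\ge r^N(\delta+\varepsilon)$ holds for all $\delta\ge0$ if and only if $\lambda^N(\vp,k,\ell)\le\lambda^M(\vp,k,\ell)+\varepsilon$ holds for all $k$. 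The forward direction is a clean monotone computation using $\le$ throughout: for any $\delta'$ with $r^N(\delta')\ge k$, either $\delta'<\varepsilon$ and the bound is trivial, or $\delta'-\varepsilon$ witnesses $r^M(\delta'-\varepsilon)\ge k$, so $\delta'-\varepsilon\le\lambda^M(\vp,k,\ell)$.

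Given this duality, the equality follows by reading both pseudometrics as the same family of scalar gaps. Admissibility of $\varepsilon$ for $\derosion^\mathcal{W}$ means $r^M(\delta)\ge r^N(\delta+\varepsilon)$ and its mirror for all $(\vp,\delta,\ell)$, which by the duality is equivalent to $|\lambda^M(\vp,k,\ell)-\lambda^N(\vp,k,\ell)|\le\varepsilon$ for all $(\vp,k,\ell)$, i.e.\ $\dlanscape\le\varepsilon$; taking the infimum over admissible $\varepsilon$ yields $\dlanscape\le\derosion^\mathcal{W}$. For the reverse I set $\varepsilon=\dlanscape=\|\lambda^M-\lambda^N\|_\infty$, so $\lambda^N(\vp,k,\ell)\le\lambda^M(\vp,k,\ell)+\varepsilon$ and its mirror hold for all $(\vp,k,\ell)$, and then invoke the backward direction of the duality to conclude that $\varepsilon$ is admissible for $\derosion^\mathcal{W}$, giving $\derosion^\mathcal{W}\le\dlanscape$.

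The step I expect to be the main obstacle is exactly the backward direction of the duality, where the suprema defining $\lambda$ need not be attained: from $\lambda^M(\vp,k,\ell)\ge\delta$ one cannot immediately conclude $r^M(\delta)\ge k$ at the single boundary value $\delta=\lambda^M(\vp,k,\ell)$. I would circumvent this with an $\eta$-perturbation. For any $\eta>0$, setting $k=r^N(\delta+\varepsilon+\eta)$ gives $\lambda^N(\vp,k,\ell)\ge\delta+\varepsilon+\eta$, hence $\lambda^M(\vp,k,\ell)\ge\delta+\eta>\delta$; since the supremum defining $\lambda^M(\vp,k,\ell)$ strictly exceeds $\delta$, the set $\{\delta':r^M(\delta')\ge k\}$ contains some $\delta'>\delta$, and monotonicity of $r^M$ forces $r^M(\delta)\ge k=r^N(\delta+\varepsilon+\eta)$. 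Thus $\varepsilon+\eta$ is admissible for $\derosion^\mathcal{W}$ for every $\eta>0$, and letting $\eta\to0$ yields $\derosion^\mathcal{W}\le\varepsilon=\dlanscape$, which together with the previous paragraph completes the equality and hence the full chain $\dlanscape=\derosion^\mathcal{W}\le\derosion$.
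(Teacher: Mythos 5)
Your proof is correct and follows essentially the same route as the paper's: the inequality $\derosion^\mathcal{W}\le\derosion$ via the containment $\left(\boxed{\vp}_\delta^\ell\right)^{+\varepsilon}\subseteq\boxed{\vp}_{\delta+\varepsilon}^\ell$ together with rank monotonicity (which the paper dismisses as ``obvious by definition''), and the equality by translating rank conditions over worms into landscape-value comparisons in both directions. If anything, your $\eta$-perturbation step is more careful than the paper's own argument, which passes from $\lambda^M(\vp,k,\ell)\geq\varepsilon$ to $\RKG^M(\boxed{\vp}_\varepsilon^\ell)\geq k$ without addressing that the supremum defining $\lambda^M$ need not be attained; your version closes that small gap.
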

    
    
    
    
    \begin{proof}
    $\derosion^\mathcal{W}\leq \derosion$ is obvious by definition. 
    
    To show $\dlanscape\leq \derosion^\mathcal{W}$. 
    Given two persistence modules $M, N$, assume  $\derosion^\I (M, N)=\varepsilon$. For fixed $\vp, k, \ell$, let $\lambda^M(\vp, k,\ell)=\delta_1$ and $\lambda^N(\vp, k, \ell)=\delta_2$. Without loss of generality, assume $\delta_2\geq \delta_1$. We want to show that $\delta_2 - \delta_1 \leq \varepsilon$. 
    By the construction of $\derosion^\mathcal{W}$, we know that for any $\alpha>0$, $k > 
    \RKG^N(\boxed{\vp}^\ell_{\delta_1+\alpha}(x))\geq \RKG^M(\boxed{\vp}^\ell_{\delta_1+\varepsilon+\alpha}(x))$. One can get $\delta_1+\varepsilon+\alpha > \delta_2\implies \varepsilon+\alpha > \delta_2-\delta_1$. By taking $\alpha\rightarrow 0$, we have $\delta_2-\delta_1\leq \varepsilon$. 
    

    To show $\derosion^\mathcal{W} \leq \dlanscape$. Let $\dlanscape(M, N)=\delta$. For any $I=\boxed{\vp}_\varepsilon^\ell\in\I$, we want to show that $\RKG^M(\boxed{\vp}_\varepsilon^\ell)\geq \RKG^N(\boxed{\vp}_{\varepsilon+\delta}^\ell)$ and $\RKG^N(\boxed{\vp}_\varepsilon^\ell)\geq \RKG^M(\boxed{\vp}_{\varepsilon+\delta}^\ell)$. We prove the first inequality. The second one can be proved in a similar way. Let $k=\RKG^N(\boxed{\vp}_{\varepsilon+\delta}^\ell)$, then $\lambda^N(\vp, k, \ell)\geq \varepsilon+\delta$. By the assumption $\dlanscape(M, N)=\delta$, we know that  $\lambda^N(\vp, k, \ell)\geq \varepsilon$, which implies $\RKG^M(\boxed{\vp}_\varepsilon^\ell)\geq k=\RKG^N(\boxed{\vp}_{\varepsilon+\delta}^\ell)$.
    \end{proof}

\begin{proposition}[Stability]\label{prop:stability}
    Given two filtration functions $f, f':\gX\to\Real^2$, 
    \[
        \left \|\lambda^{M^f}-\lambda^{M^{f'}}\right \|_\infty \leq \left \| f-f'\right \|_\infty
    \]
    %
\end{proposition}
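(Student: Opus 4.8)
The plan is to reduce the desired Lipschitz bound to the two distances already studied, namely to chain together the stability of the erosion distance under perturbations of the filtration function with the comparison $\dlanscape = \derosion^{\mathcal W} \le \derosion$ proved in the preceding proposition. Concretely, since $\dlanscape(M,N)=\|\lambda^M-\lambda^N\|_\infty$ by definition, the claim $\|\lambda^{M^f}-\lambda^{M^{f'}}\|_\infty \le \|f-f'\|_\infty$ is equivalent to $\dlanscape(M^f,M^{f'}) \le \|f-f'\|_\infty$. Using the comparison proposition, it suffices to prove the stronger statement $\derosion(M^f, M^{f'}) \le \|f-f'\|_\infty$. So the first step I would take is to quote $\dlanscape = \derosion^{\mathcal W} \le \derosion$ and announce that the entire problem now lives at the level of the erosion distance, which is cleaner because it is defined uniformly over all intervals rather than just worms.

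The core step is then the classical stability of the generalized rank invariant under function perturbation. Set $\varepsilon \triangleq \|f-f'\|_\infty$. I would show that the two modules $M^f$ and $M^{f'}$ are comparable in the sense demanded by the erosion distance with parameter $\varepsilon$: for every interval $I \in \mathbf{Int}(\RR^2)$,
\begin{equation*}
\RKG^{M^f}(I) \ge \RKG^{M^{f'}}(I^{+\varepsilon}) \quad\text{and}\quad \RKG^{M^{f'}}(I) \ge \RKG^{M^f}(I^{+\varepsilon}).
\end{equation*}
The mechanism is that $\|f-f'\|_\infty \le \varepsilon$ forces a containment of sublevel sets: at every grid point $\vu$ one has $\gX^{f}_{\vu-\varepsilon} \subseteq \gX^{f'}_{\vu} \subseteq \gX^{f}_{\vu+\varepsilon}$, where the shift by $\varepsilon$ is along both coordinates. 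These inclusions are exactly the simplicial-level data that produce an $\varepsilon$-interleaving between $M^f$ and $M^{f'}$ in the sense of Definition~\ref{def:interleaving}. I would construct the interleaving morphisms $\phi, \psi$ directly from the inclusion-induced maps on homology, and verify that the required triangles commute because they are all induced by honest inclusions of simplicial complexes, so functoriality of homology does the bookkeeping.

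The remaining link is to pass from an $\varepsilon$-interleaving to the generalized-rank comparison. The key observation is that an interleaving shifts the whole module diagonally by $\varepsilon$, and this diagonal shift on a module induces precisely the $\varepsilon$-extension $I^{+\varepsilon}$ on intervals; restricting the interleaving morphisms to $M|_I$ and to $M|_{I^{+\varepsilon}}$ and using functoriality of $\varprojlim$ and $\varinjlim$ gives factorizations of the limit-to-colimit map that force the rank inequalities above. This is where I expect the main obstacle to lie: the generalized rank is defined through a canonical limit-to-colimit map, and one must check that the interleaving morphisms are compatible with taking limits and colimits over the interval, so that the canonical map for $I^{+\varepsilon}$ factors through the canonical map for $I$ (and vice versa). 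This compatibility is the technical heart and is essentially the statement that the erosion distance is bounded by the interleaving distance, $\derosion \le \di$; I would either cite this from~\citep{Kim2021GeneralPersisDiagramPosets} or reprove it via the functoriality argument just sketched. Once $\derosion(M^f,M^{f'}) \le \di(M^f,M^{f'}) \le \varepsilon = \|f-f'\|_\infty$ is in hand, combining with $\dlanscape \le \derosion$ closes the proof.
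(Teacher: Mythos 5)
Your proposal is correct and follows essentially the same route as the paper's proof: both reduce the claim to the chain $\dlanscape(M^f,M^{f'}) \leq \derosion(M^f,M^{f'}) \leq \di(M^f,M^{f'}) \leq \|f-f'\|_\infty$, with the key inequality $\derosion \leq \di$ taken from \citep{Kim2021GeneralPersisDiagramPosets}. The additional detail you supply---constructing the $\varepsilon$-interleaving from sublevel-set inclusions via functoriality of homology, and the limit/colimit compatibility behind $\derosion \leq \di$---is precisely what the paper delegates to the cited references.
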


\begin{proof}
    Let $M^f$ and $M^{f'}$ be the persistence modules derived by $f$ and $f'$ respectively. Then, we have the following chain of inequalities:
    \begin{align*}
        \norm{\lambda^{M^f}-\lambda^{M^{f'}}}_{\infty} = & \dlanscape(M^f, M^{f'}) \\
        \leq & \derosion(M^f, M^{f'}) \\
        \leq & \di(M^f, M^{f'}) \\
        \leq & \norm{f-f'}_{\infty}
    \end{align*}
    where $\di(M^f, M^{f'})$ is the interleaving distance.
    The second to last inequality is shown in \citep{Kim2021GeneralPersisDiagramPosets}.
    \end{proof}


\subsection{Differentiability of \gril{}}
When $\gX$ is a finite space (e.g. finite simplicial complex (see Definition \ref{app_def:simplicial_comp})) with $|\gX|=n$ then, any $f \colon \gX \to \RR^2$ can be represented as a vector in $\RR^{2n}$. From Proposition~\ref{prop:stability}, we can get Lipschitz continuity property for {\gril{}}.

\begin{proposition}[Lipschitz continuous]
\label{prop:gril_Lipschitz}
For a finite space $\gX$ with $|\gX|=n$ and fixed $k,\ell,\vp$, the function $\Lambda^{k,\ell}_{\vp}:\RR^{2n}\to \RR$ given by $\Lambda^{k,\ell}_{\vp}(f)=\lambda^{M^f}(k,\ell,\vp)$ is Lipschitz continuous.
\end{proposition}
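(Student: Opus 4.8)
The plan is to obtain Lipschitz continuity as an almost immediate corollary of the stability result in Proposition~\ref{prop:stability}. First I would fix the tuple $(\vp, k, \ell)$ and note that for any two filtration functions $f, f'\colon\gX\to\RR^2$ the difference of the two \gril{} values at this single tuple is dominated by the sup-norm difference of the entire landscapes:
\[
\left|\Lambda^{k,\ell}_{\vp}(f) - \Lambda^{k,\ell}_{\vp}(f')\right| = \left|\lambda^{M^f}(\vp,k,\ell) - \lambda^{M^{f'}}(\vp,k,\ell)\right| \le \left\|\lambda^{M^f}-\lambda^{M^{f'}}\right\|_\infty .
\]
Chaining this with Proposition~\ref{prop:stability} replaces the right-hand side by $\|f-f'\|_\infty$, so the only remaining work is to interpret this quantity as a genuine vector norm on $\RR^{2n}$.

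Next I would make explicit the identification between the function sup-norm and a norm on $\RR^{2n}$. Since $\gX$ is finite with $|\gX|=n$, flattening $f=(f(x))_{x\in\gX}$ into a vector of $\RR^{2n}$ turns the function sup-norm $\|f-f'\|_\infty = \max_{x\in\gX}\|f(x)-f'(x)\|_\infty$ into exactly the $\ell^\infty$-norm of the flattened vector. The chain above then reads
\[
\left|\Lambda^{k,\ell}_{\vp}(f)-\Lambda^{k,\ell}_{\vp}(f')\right| \le \|f-f'\|_\infty ,
\]
which is precisely $1$-Lipschitz continuity of $\Lambda^{k,\ell}_{\vp}$ with respect to the $\ell^\infty$-norm on $\RR^{2n}$.

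Finally, if Lipschitz continuity is wanted with respect to another norm on $\RR^{2n}$ (for instance the Euclidean norm one has in mind when treating $f$ as a point of $\RR^{2n}$), I would invoke equivalence of norms on the finite-dimensional space $\RR^{2n}$: there is a constant $C>0$ with $\|v\|_\infty \le C\|v\|$ for the chosen norm $\|\cdot\|$, which upgrades the estimate to $\left|\Lambda^{k,\ell}_{\vp}(f)-\Lambda^{k,\ell}_{\vp}(f')\right|\le C\|f-f'\|$. I do not expect a genuine obstacle here; the statement is essentially a restatement of stability for a single fixed tuple. The one point deserving care is the bookkeeping that the metric in which stability is phrased (the function sup-norm) coincides with the vector sup-norm on $\RR^{2n}$, after which the conclusion follows in a single line.
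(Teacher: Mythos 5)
Your proposal is correct and follows essentially the same route as the paper: bound the single-tuple difference by $\left\|\lambda^{M^f}-\lambda^{M^{f'}}\right\|_\infty$, apply Proposition~\ref{prop:stability}, and then compare the function sup-norm with a vector norm on $\RR^{2n}$ (the paper writes $\norm{f-f'}_\infty \leq 2\norm{v_f-v_{f'}}_\infty \leq 2\norm{v_f-v_{f'}}_2$, while you identify the sup-norms exactly and invoke equivalence of norms, which is the same bookkeeping step).
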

\begin{proof}
     Given filtration functions $f, f'$ and their corresponding vector representations $v_f, v_{f'} \in \RR^{2n}$, one can check that $\norm{f-f'}_\infty \leq 2\norm{v_f - v_{f'}}_\infty \leq 2\norm{v_f - v_{f'}}_2$. Combining this with Proposition~\ref{prop:stability}, 
     we get that $\Lambda^{k,\ell}_{\vp}$ is Lipschitz continuous with respect to the underlying filtration functions. 
\end{proof}

By Rademacher's theorem~\citep{Evans_Measure_Theory}, we have $\Lambda^{k,\ell}_{\vp}$, as a Lipschitz continuous function, being differentiable almost everywhere.

\begin{corollary}\label{cor:gril_differentiable}
   $\Lambda^{k,\ell}_{\vp}$ is differentiable almost everywhere.
\end{corollary}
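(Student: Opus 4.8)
The plan is to obtain this corollary as an immediate consequence of the Lipschitz property already established, combined with a classical result from geometric measure theory. Concretely, Proposition~\ref{prop:gril_Lipschitz} asserts that for fixed $k, \ell, \vp$ the map $\Lambda^{k,\ell}_{\vp}: \RR^{2n} \to \RR$ is Lipschitz continuous. Since the domain $\RR^{2n}$ is finite-dimensional and the codomain is $\RR$, the hypotheses of Rademacher's theorem are exactly met, so the entire content of the corollary is the invocation of that theorem.

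First I would restate that $\Lambda^{k,\ell}_{\vp}$ is a real-valued Lipschitz function on the finite-dimensional Euclidean space $\RR^{2n}$, which is precisely the conclusion of Proposition~\ref{prop:gril_Lipschitz}; the representation of a filtration function $f$ on the finite space $\gX$ as a vector in $\RR^{2n}$ (discussed just before that proposition) is what makes the domain finite-dimensional. Then I would apply Rademacher's theorem~\citep{Evans_Measure_Theory}, which states that any locally Lipschitz function $\RR^m \to \RR$ is differentiable at Lebesgue-almost every point. Taking $m = 2n$ yields differentiability of $\Lambda^{k,\ell}_{\vp}$ almost everywhere, completing the proof.

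There is essentially no obstacle here, since all the analytical difficulty has been front-loaded into establishing the Lipschitz estimate in Proposition~\ref{prop:stability} and Proposition~\ref{prop:gril_Lipschitz}; the corollary itself is a one-line consequence. The only point worth verifying is that the hypotheses of Rademacher's theorem are genuinely satisfied---namely that the domain is finite-dimensional (guaranteed by $|\gX| = n < \infty$) and that Lipschitz continuity holds globally rather than merely on a subset---both of which are delivered directly by the preceding proposition.
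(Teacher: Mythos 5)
Your proof is correct and is exactly the paper's argument: the paper likewise derives the corollary by combining the Lipschitz continuity of $\Lambda^{k,\ell}_{\vp}$ from Proposition~\ref{prop:gril_Lipschitz} with Rademacher's theorem~\citep{Evans_Measure_Theory} on the finite-dimensional domain $\RR^{2n}$. Nothing is missing.
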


The differentiability of $\Lambda_\vp^{k,\ell}$
in Corollary~\ref{cor:gril_differentiable} stands for the existence of directional derivatives. 
But the existence of a "steepest" direction 
might not be unique. 
Here we propose a method to compute one specific steepest direction based on Proposition~\ref{thm:derivative} (see the proof and an experiment testing our method on a synthetic dataset in Appendix~\ref{asec:diff}). 



\begin{proposition}\label{thm:derivative}
    Consider the space of all filtration functions $\{f:\gX\to \RR^2\}$ on a finite space $\gX$ with $|\gX|=n$, which is equivalent to $\RR^{2n}$. 
    For fixed $k, \ell, \vp$, 
   there exists a measure-zero subset $Z\subseteq \RR^{2n}$ such that for any $f\in \RR^{2n}\setminus Z$ satisfying the following generic condition: $\forall x\neq y\in \gX, f(x)_1\neq f(y)_1, f(x)_2\neq f(y)_2$, 
    there exists an assignment $s:\gX\to \{\pm 1,0,\pm\ell\}^2$
    such that 
    \begin{align*}
        \nabla_s \Lambda_\vp^{k,\ell}(f)\triangleq & \lim_{\alpha\to 0}\frac{\Lambda_\vp^{k,\ell}(f+\alpha s)-\Lambda_\vp^{k,\ell}(f)}{\alpha\|s\|_{\infty}}\\
        = & \max_{g\in\gX}\nabla_g\Lambda_\vp^{k,\ell}(f).
    \end{align*}
\end{proposition}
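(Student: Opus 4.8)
The plan is to exploit the fact that, for fixed $\vp, k, \ell$, the scalar $\delta^\ast \triangleq \Lambda_\vp^{k,\ell}(f)$ depends on $f$ only through finitely many combinatorial comparisons, so that $f \mapsto \Lambda_\vp^{k,\ell}(f)$ is piecewise linear, and then to read off the steepest ascent direction from the geometry of the worm boundary. First I would establish the piecewise-linear structure: by Definition~\ref{Defn:GRIL}, $\Lambda_\vp^{k,\ell}(f)$ is the supremal width $\delta$ for which $\RKG^{M^f}(\boxed{\vp}_\delta^\ell) \geq k$, and by the monotonicity in Remark~\ref{rem:monotone} this is exactly the threshold at which the generalized rank drops below $k$. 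The generalized rank is computed, following \citep{DKM22}, by a zigzag persistence module supported on the boundary of the worm; its value changes only when some filtration value $f(x)$ crosses a facet of $\partial \boxed{\vp}_\delta^\ell$ as $\delta$ varies. Since there are finitely many simplices and finitely many facets, the locus in $\RR^{2n}$ where the combinatorial type changes is a finite union of hyperplanes. I would take the exceptional set $Z$ to be the union of these hyperplanes together with the non-generic locus $\{f : \exists\, x\neq y,\ f(x)_i = f(y)_i\}$; this is a finite union of hyperplanes, hence measure zero, and off $Z$ the function $\Lambda_\vp^{k,\ell}$ is locally affine, so every directional derivative exists and is linear in the direction.

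Second, I would pin down the \emph{critical simplices} that actually determine $\delta^\ast$. At the threshold width $\delta^\ast$ the binding event is the passage of one or more values $f(x)$ across $\partial\boxed{\vp}_{\delta^\ast}^\ell$; I would show that under the genericity hypothesis (no two simplices sharing a coordinate) the binding constraint is attained cleanly, with each critical coordinate of each critical simplex lying on a single facet of the worm boundary. These are precisely the simplices whose perturbation can move $\delta^\ast$ to first order, while every other simplex has $f(x)$ in the interior of a constant-rank region and contributes $0$.

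Third, I would compute the derivative from the boundary geometry. As $\delta$ grows, the square facets of $\boxed{\vp}_\delta^\ell$ recede at unit rate in the corresponding coordinate, whereas at the two extreme ends of the anti-diagonal segment, where the span $(\ell-1)\delta$ and the square half-width $\delta$ add up, the outermost facets recede at rate $\ell$. Consequently, to make a critical simplex track the receding boundary and thereby enlarge $\delta^\ast$ as fast as possible, its optimal perturbation in each coordinate is $0$ (non-critical there), $\pm 1$ (critical on a square facet), or $\pm\ell$ (critical at an anti-diagonal tip); collecting these per-simplex optimal moves defines the assignment $s : \gX \to \{\pm1, 0, \pm\ell\}^2$ with $\|s\|_\infty \leq \ell$.

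Finally, I would establish the equality $\nabla_s \Lambda_\vp^{k,\ell}(f) = \max_{g\in\gX} \nabla_g \Lambda_\vp^{k,\ell}(f)$. Here $\delta^\ast$ equals the minimum, over the critical constraints, of the individual widths that each critical simplex permits; to increase this minimum one must relax the binding constraint, and genericity forces the binding constraint to be realized by a \emph{unique} simplex $g^\ast$. Moving $g^\ast$ alone already achieves the maximal rate of increase, so $\max_g \nabla_g \Lambda = \nabla_{g^\ast}\Lambda$, and the global assignment $s$, which in particular perturbs $g^\ast$ optimally, attains the same value once we normalize by $\|s\|_\infty$. I expect the main obstacle to be the second step: rigorously tracking how the zigzag-computed generalized rank changes as the worm boundary sweeps past filtration values, and proving that generically exactly one constraint binds so that the steepest direction is well defined and coincides with the best single-simplex move. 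Confirming that the outermost facets recede at rate exactly $\ell$ (hence the $\pm\ell$ entries) and that all degenerate configurations are confined to the measure-zero set $Z$ are the remaining delicate points; the surrounding argument is routine piecewise-linear bookkeeping.
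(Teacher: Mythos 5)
Your overall skeleton---locate the simplices that realize the binding width $\delta^\ast$, assign them perturbations that track the receding boundary (with $\pm\ell$ entries for the extremal top/left edges), and prove optimality by pairing an upper bound with achievability---is essentially the paper's strategy. The genuine gap is in your second step, and it propagates through everything after it. You assume that the generalized rank $\RKG^{M^f}\bigl(\boxed{\vp}^\ell_\delta\bigr)$ changes only when some \emph{single} value $f(x)$ crosses a facet of the worm boundary, and hence that generically a unique simplex $g^\ast$ realizes the binding constraint, so that perturbing $g^\ast$ alone attains the maximal rate. On the two anti-diagonal edges of the worm this is false. Following \citep{DKM22}, the rank equals the number of full bars of the zigzag module along the boundary path, and along an anti-diagonal edge that zigzag changes combinatorially when the insertion/deletion \emph{order} of a pair $(\sigma,\tau)$ swaps, i.e., when the join $f(\sigma)\vee f(\tau)$ (which is generically not a filtration value of any single simplex) crosses the edge; at that moment neither $f(\sigma)$ nor $f(\tau)$ need lie on the boundary. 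In this situation your critical set is empty and your assignment degenerates to $s\equiv 0$. The paper's proof instead assigns $s(\sigma)=(0,\pm 1)$ and $s(\tau)=(\pm 1,0)$ to \emph{both} members of the pair: the binding relation reads $\max(f(\sigma)_1,f(\tau)_1)+\max(f(\sigma)_2,f(\tau)_2)=p_1+p_2\pm 2\delta$, so moving both coordinates at unit rate moves $\delta$ at rate $1$ (matching the upper bound coming from Proposition~\ref{prop:stability}), whereas moving any single simplex achieves only $1/2$. Thus your claim that the best single-simplex move realizes the maximum fails exactly here; note also that in the paper's proof the maximum ranges over all directions $g\in\RR^{2n}$ with $\|g\|_\infty=1$ (the statement's ``$g\in\gX$'' is loose notation), not over single-simplex moves.

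Your facet-rate bookkeeping also does not hold up under either reading of the worm. For the continuous worm of Definition~\ref{Defn:GRIL}, the only axis-parallel facets besides the excluded bottom/right ones are the top and left edges, which recede at rate $\ell$; the remainder of the boundary consists of the two anti-diagonal segments, so there is no unit-rate facet on which a single simplex value can sit. If you instead work with the discretized worm, the staircase facets contributed by the square at offset $c\delta$ recede at rate $c+1$ for $c=0,\dots,\ell-1$, so tracking them would require assignment entries $\pm(c+1)$ outside $\{\pm 1,0,\pm\ell\}$. The unit entries of $s$ can only be justified through the pair/join mechanism above, which is precisely the content of cases $3$ and $4$ in the paper's proof. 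Finally, the paper must additionally place into $Z$ those $f$ whose binding events mix the top/left type with the anti-diagonal type (and argues this set has measure zero); your uniqueness-of-the-binding-constraint claim silently assumes such degeneracies away without proof.
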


\section{Algorithm}
\label{sec:algs}
We present our algorithm to compute \gril{} in this section.

In practice, we choose center points $\vp$ from some finite subset $\gP\subset\RR^2$, e.g. a finite uniform grid in $\RR^2$, and consider $k\leq K, \ell\leq L$ for some fixed $K, L\in \Nat_+$.  
Then, \gril{} $\{\lambda^M( \vp, k,\ell)\}$
can be viewed as a vector of dimension $|\gP|\times K\times L$.

The high-level idea of the algorithm is as follows: Given a bi-filtration function $f:\gX\to\RR^2$, 
for each triple $(\vp, k, \ell)\in \gP\times  K \times L$, we compute 
$\lambda^{M^f}(\vp, k, \ell)=\sup_{\delta\geq 0} \{\RKG^{M^f}({\boxed{\vp}^\ell_\delta)}\geq k\}$.
In essence, we need to compute the maximum width over worms on which 
the generalized rank is at least $k$. In order to find the value of this width, we use binary search. We compute generalized rank $\RKG^{M^f}\left({\boxed{\vp}^\ell_\delta}\right)$ by applying the algorithm proposed in \citep{DKM22}, which uses zigzag persistence on a boundary path. This zigzag persistence is computed efficiently by a recent algorithm proposed in~\citep{dey2022fast}. We denote the sub-routine to compute generalized rank over a worm by \textsc{ComputeRank} in algorithm \ref{alg:CompGRIL} mentioned below. 
{\sc{ComputeRank}}($f, I$) takes as input a bi-filtration function $f$ and an interval $I$,  and outputs generalized rank over that interval. 
In order to use the algorithm proposed in \citep{DKM22}, the worms need to have their boundaries aligned with a grid structure defined on the range of $f$. Thus, we normalize $f$ to be in the range $[0,1]\times[0,1]$, define a grid structure on $[0,1]\times[0,1]$ and discretize the worms. 
Let $\grid = \{ \left (\frac{m}{M}, \frac{n}{M} \right ) \mid m, n \in \{0,1,\hdots, M \}\}$ for some $M\in \Int_+$. We denote the grid resolution as $\rho\triangleq 1/M$. We 
take the set of center points $\mathcal{P}\subseteq \grid$ as a uniform subgrid of $\grid$.
We consider the discrete worms for $\vp\in\gP, \delta=d\cdot\rho, d\in\Int_{\geq 0}$ as follows:
\begin{equation}\label{eq:discrete_worm}
    \widehat{\boxed{\vp}}^\ell_\delta\triangleq \bigcup_{\substack{\vq=\vp+(\alpha, -\alpha)\\  |\alpha|\leq (l-1)\delta \\\vq\in\grid{}}}{\boxed{\vq}_\delta}.
\end{equation}
Essentially, a discrete $\ell$-worm $\widehat{\boxed{\vp}}^\ell_\delta$ centered at $\vp$ with width $\delta$ is the union of $2\ell-1$ squares with width $\delta$ centered at $\vp \pm (c\delta, -c\delta)$ for $c \in \{0,1,\ldots, \ell-1\}$ along with the intermediate staircases between two consecutive squares of step-size equal to \emph{grid resolution} ($\rho$). Figure \ref{fig:discre_l_worm} (middle) shows the discretization of a $2$-worm. This construction is sensitive to the grid resolution. 

Now all such discrete worms $\widehat{\boxed{\vp}}$ 
are intervals whose boundaries are aligned with the $\grid$.
We apply the procedure {\sc{ComputeRank}}$(f, I)$ to compute $\RKG^{M^f}(I)$ for $I={\widehat{\boxed{\vp}}^\ell_\delta}$. Denote
\begin{equation}\label{eq:approximate_lambda}
    \hat{\lambda}^{M^f}(\vp, k, \ell)=\sup_{\delta\geq 0} \{\RKG^{M^f}({\widehat{\boxed{\vp}}^\ell_\delta)}\geq k\}.
\end{equation}
\begin{remark}\label{rmk:discrete_approximate}
    One can observe that 
    \[
        \lambda^{M^f}(\vp, k, \ell)\leq \hat{\lambda}^{M^f}(\vp, k, \ell)\leq \lambda^{M^f}(\vp, k, \ell)+ \rho
    \]
    Therefore, we compute $\hat{\lambda}$ as an approximation of $\lambda$ in practice.
\end{remark}

The pseudo-code is given in Algorithm~\ref{alg:CompGRIL}. The algorithm is described in detail in Appendix~\ref{sec:algorithm}.
\begin{figure}[!htb]
    \centering
    \includegraphics[width=0.7\textwidth]{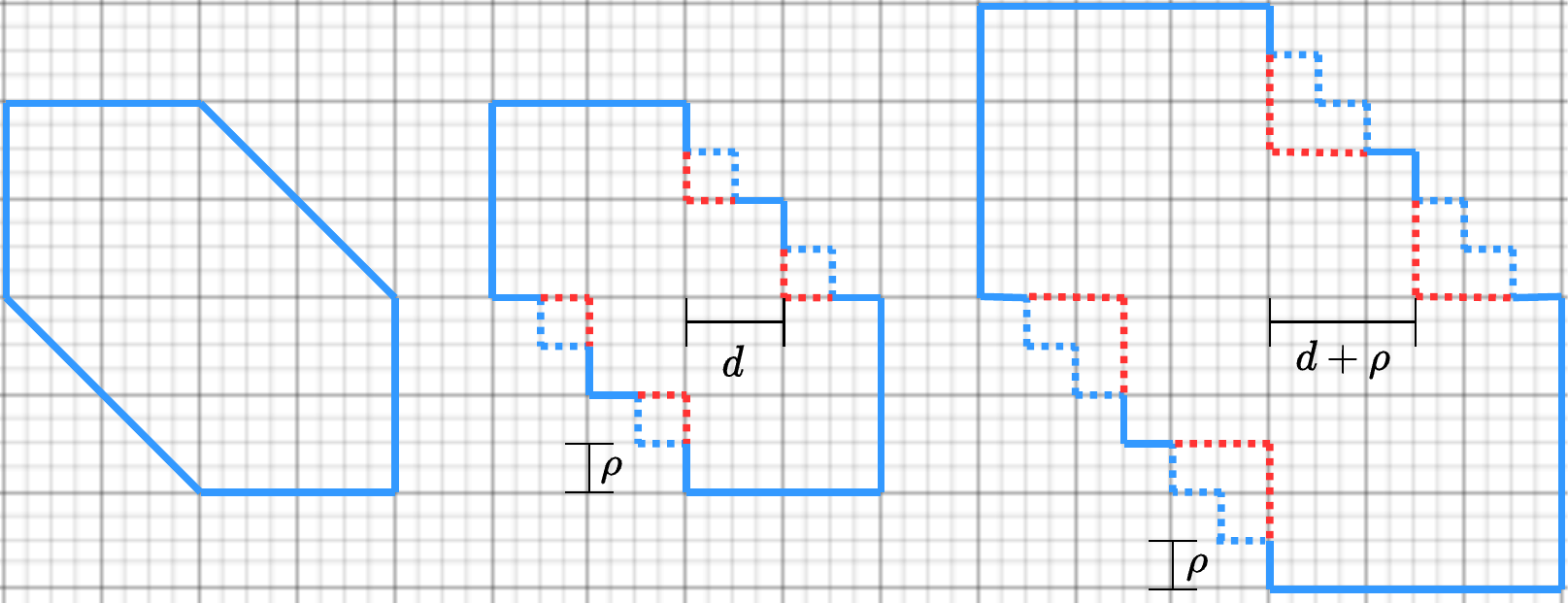}
    \caption{A $2$-worm, discretized $2$-worm and expanded discretized $2$-worm; $\rho$ denotes grid resolution. 
    The blue dotted lines show the intermediate staircase with step-size $\rho$. The red dotted lines form parts of the squares with size $d$ which are replaced by the blue dotted lines in the worm. 
    The last figure shows the expanded $2$-worm with red and blue dotted lines. The expanded $2$-worm has width $d + \rho$ which is the one step expansion of the worm with width $d$.}
    \label{fig:discre_l_worm}
\end{figure}

\begin{algorithm}[h!]
\caption{\textsc{Compute\gril}}\label{alg:CompGRIL}
\begin{algorithmic}
   \STATE {\bfseries Input:} $f: \text{Bi-filtration function}, \ell \geq 0, k\geq 1, \vp \in \mathcal{P}\subseteq \grid$, $\rho$: grid resolution
   \STATE {\bfseries Output:} $\hat{\lambda}(\vp, k, l)$:  \gril{} value at a point $\vp$ for  fixed $k$ and $\ell$
   \STATE {\bfseries Initialize:} $d_{min} \gets \rho, d_{max} \gets 1$, $\lambda \gets 0$ 
   \WHILE{$d_{min} \leq d_{max}$}
   \STATE $d \gets  (d_{min} + d_{max}) / 2 $.
   \STATE $I \gets \widehat{\boxed{\vp}}^\ell_d$ 
   \STATE $r \gets$ \textsc{ComputeRank($f, I$)}
   
   \IF {$r \geq  k$}
        \STATE $\lambda \gets d$
        \STATE $d_{min} \gets d+\rho$
    \ELSE{}
        \STATE $d_{max} \gets d-\rho$ 
    \ENDIF
    \ENDWHILE{}
    
\RETURN {$\lambda$}

\end{algorithmic}
\end{algorithm}   

\textbf{Time complexity.} 
Assuming a grid with $t$ nodes and a bi-filtration of
a complex with $n$ simplices on it, one can observe that each probe in the binary search
takes $O(n^\omega)$ time 
where $\omega<2.37286$ is the matrix multiplication exponent~\citep{matrix_exponent}. This is because each probe generates a zigzag filtration of length $O(n)$ with $O(n)$ simplices. Therefore, the binary search takes $O(n^\omega\log t)$ time giving a total time complexity of $O(tn^{\omega}\log t))$ that accounts for $O(t)$ worms.

\textbf{Speeding up the implementation.} 
In implementation, we use some observations that help run \textsc{ComputeGril} more efficiently in practice. When computing \gril{} for $k=1,2,\hdots , n$, we use the monotone property described in Remark \ref{rem:monotone} to reduce the scope of the binary search for successive values of $k$. For example, the value of \gril{} for $k$ is always greater than or equal to the value of \gril{} for $k+1$. Thus, we can reduce the scope of the binary search while computing for $k+1$ by setting the maximum in the binary search to be the value of \gril{} at $k$. Further, we store the values of rank for a given width $d$ while computing the value of \gril{} for a $k$. This information can be reused in later computations. For example, we store the values of generalized ranks of worms for different values of $d$ at a center point $\vp$ during the binary search for, say $k=k_0$. We use this information for successive binary searches for all $k > k_0$ and save on the zigzag persistence computation for those values of $d$. While computing zigzag persistent, along with the barcode for $0$th homology group, the barcode for $1$st homology group is also computed. We store this information and reuse it while computing \gril{} values for $1$st homology group. These observations reduce the total number of zigzag persistence computations to a significant extent resulting in reducing the total computational time.

\section{Experiments}\label{sec:experiment}
Our method \gril{} exploits generalized rank invariant whereas existing methods exploit rank invariant which is equivalent to fibered barcode. Although both invariants are known to be incomplete for multiparameter persistence as any other discrete invariant, the generalized rank invariant is more informative in theory. 
Our experiments support this theoretical hypothesis in practice to some extent as we obtain better accuracy for all cases in Table~\ref{tb:toy_dataset} and  $13$ out of $20$ cases in Table~\ref{tab:graph_acc} in comparison to existing methods applying some form of fibered barcodes.
We perform experiments
on synthetic datasets as well as graph benchmark datasets. On these datasets, we define a bi-filtration and compute \gril{} values $\lambda(\vp, k, \ell)$ for
$\ell=2$ and for
each $k \in \{1,2,\hdots,5\}$ where $\vp$ is chosen over a uniform subgrid. Some datasets require a finer resolution for capturing meaningful information while for others, finer resolutions capture redundant information and a coarser resolution performs better. Therefore, we sample subgrids with different step-sizes from the discretized grid described in section~\ref{sec:algs} and vary $\vp$ over these subgrids.We first describe an experiment on a synthetic data set and follow it with experiments on benchmark graph data sets.

\begin{figure*}[!htb]
    \centering    \includegraphics[width=0.7\textwidth,page=2]{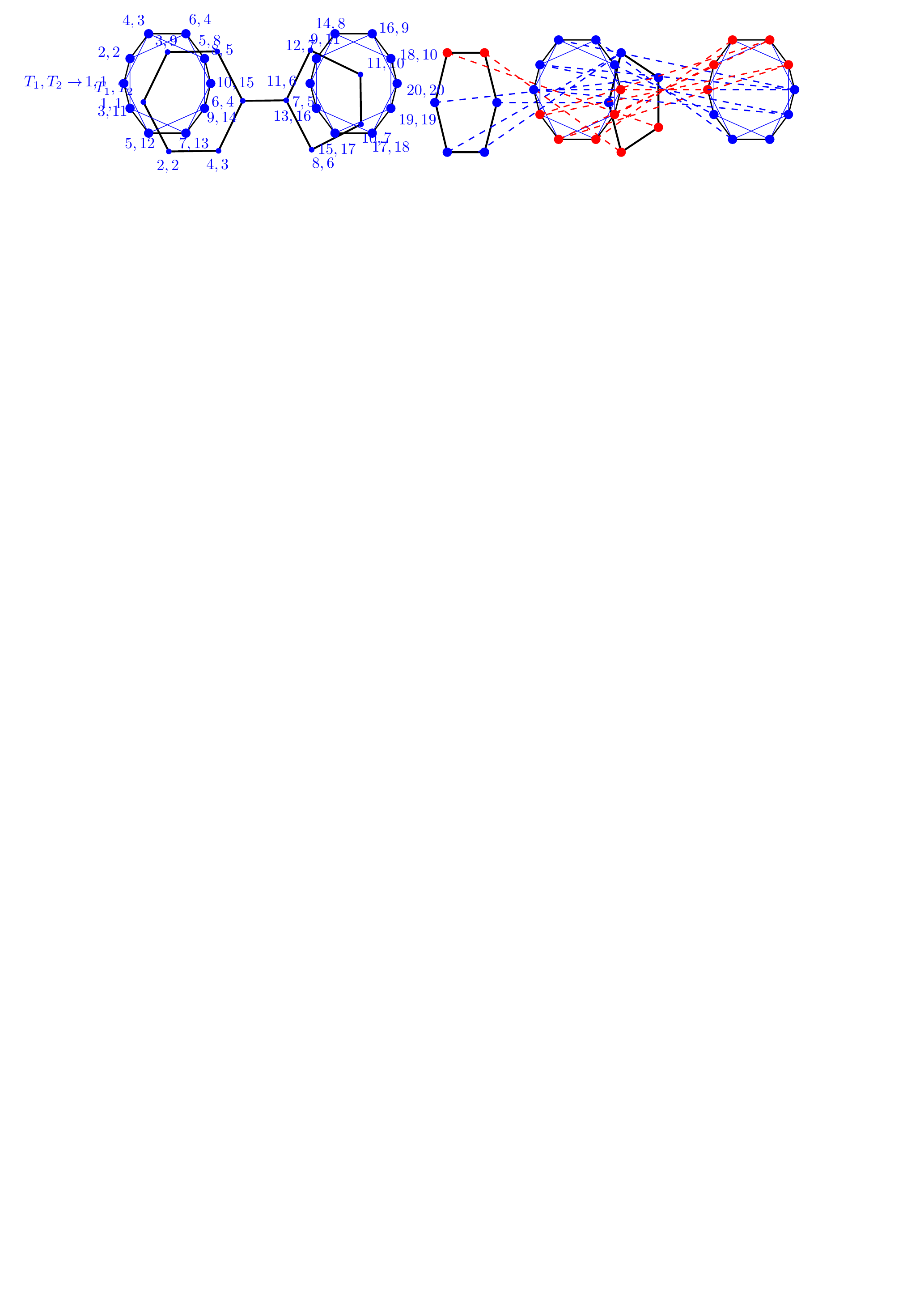}
    \caption{(Left) An example of a graph consisting of two circulant subgraphs. The pair of indices on each node represents the its order on the traversals $T_1$ and $T_2$ respectively. Both traversals start from the left node as the root node. (Right) Cross edges placed across two subgraphs. 
    }
    \label{fig:toy_dataset}
\end{figure*}

\subsection{Experiment with HourGlass dataset}
We test our model on a synthetic dataset (HourGlass) that entails a binary graph classification problem over a collection of attributed undirected graphs. Note that this synthetic dataset is designed to show that some attributed graphs can be easily classfied by $2$-parameter persistence modules but not so by $1$-parameter persistence modules or commonly used GNN models.
Each graph $G$ from either class is composed with two circulant subgraphs $G_1, G_2$ connected by some cross edges. The node attributes are order indices generated by two different traversals $T_1, T_2$. The label of classes corresponds to these two different traversals $T_1, T_2$. Therefore, the classification task is that given an attributed graph $G$, the model needs to predict which traversal is used to generate $G$. 
See Figure~\ref{fig:toy_dataset} (left) as an example of two attributed graphs with the same graph structure but with different
node attributes generated by two different traversals. More details can be found in Appendix~\ref{asec:toydata_exp}. We denote HourGlass[a,b] as the dataset of graphs generated with node size of each circulant subgraphs in range $[a,b]$. 
We generate three datasets with different sizes:  HourGlass[10,20],  HourGlass[21,30], HourGlass[31,40]. 
Each dataset contains roughly 400 graphs.
We evenly split HourGlass[21,30] into balanced training set and testing set on which we compare \gril{} with several commonly used GNN models from the literature including: Graph Convolutional Networks (GCN)~\citep{kipf2017semi}, Graph Isomorphism Networks (GIN)~\citep{gin_xu2018how} and a $1$-parameter persistent homology vector representation called persistence image (PersImg~\citep{AdamPersImg}. All GNN models contain $3$ aggregation layers. All models use 3-layer multilayer perceptron (MLP) as classifiers. More details about model and training settings can be found in Appendix~\ref{asec:toydata_exp}.
We also test these trained models on HourGlass[10,20] and HourGlass[31,40] to check if they can generalize well on smaller and larger graphs. The experimental results are shown in Table~\ref{tb:toy_dataset}. We can see that this dataset can be easily classified by our model based on $2$-parameter persistence modules with good generalization performance but $1$-parameter persistence method like PersImg or some GNN models struggle with this dataset.

\begin{table}[!tt]
\centering
\begin{tabular}{@{}ccccc@{}}
\toprule
 \multicolumn{5}{c}{\textbf{Testing accuracy of models on HourGlass}}\\
\midrule
\textbf{Model}                               & \textbf{GCN}  & \textbf{GIN}  & \textbf{PersImg} & \textbf{\gril{}} \\ 
\midrule
HourGlass[21,30]                             & \textcolor{blue}{87.25$\pm$4.0}  & 84.00$\pm$4.4  & 74.00$\pm$7.4   & \textcolor{red}{100.0$\pm$0.0}    \\ 
HourGlass[10,20]                             &  \textcolor{blue}{67.31$\pm$4.6} & 62.98$\pm$3.4   & 50.33$\pm$1.6  & \textcolor{red}{99.79$\pm$0.1}    \\ 
HourGlass[31,40]                             &  \textcolor{blue}{87.75$\pm$2.2} & 79.10$\pm$6.2   & 86.95$\pm$5.0  & \textcolor{red}{100.0$\pm$0.0}   \\ 
\bottomrule
\end{tabular}
\vspace*{0.2cm}
\caption{Table of testing results from different models. Last two rows show the testing results on HourGlass[10,20]  and HourGlass[31,40] of models trained on HourGlass[21,30]. For each dataset accuracies reported in \textcolor{red}{red} and \textcolor{blue}{blue} denote the best and second-best performance respectively.}
\label{tb:toy_dataset}
\end{table}
\begin{table}[!htbp]
    \centering
    \begin{tabular}{@{}cccccccc@{}}
    \toprule
         \textbf{Dataset} & \textbf{MP-I} & \textbf{MP-K} &\textbf{MP-L} & \textbf{P} & \textbf{\gril{}} \\ \midrule
         
         \textsc{Proteins} & 67.3 $\pm$ 3.5 & \textcolor{blue}{67.5 $\pm$ 3.1} & 65.8 $\pm$ 3.3 & 65.4 $\pm$ 2.7 & \textcolor{red}{70.9 $\pm$ 3.1} \\
         \textsc{Dhfr} & \textcolor{blue}{80.2 $\pm$ 2.3} & \textcolor{red}{81.7 $\pm$ 1.9} & 79.5 $\pm$ 2.3 & 70.9 $\pm$ 3.1 & 77.6 $\pm$ 2.5  \\
         \textsc{Cox}2 & 77.9 $\pm$ 2.7 & \textcolor{red}{79.9 $\pm$ 1.8} & 79.0 $\pm$ 3.3 & 76.0 $\pm$ 4.1 & \textcolor{blue}{79.8 $\pm$ 2.9} \\
         \textsc{Mutag} & 85.6 $\pm$ 7.3 & \textcolor{blue}{86.2 $\pm$ 2.6} & 85.7 $\pm$ 2.5 & 79.2 $\pm$ 7.7 & \textcolor{red}{87.8 $\pm$ 4.2 }\\
         \textsc{Imdb-Binary} & \textcolor{blue}{71.1 $\pm$ 2.1} & 68.2 $\pm$ 1.2 & \textcolor{red}{71.2 $\pm$ 2.0} & 54.0 $\pm$ 1.9 & 65.2 $\pm$ 2.6 \\ \bottomrule
    \end{tabular}
    \vspace*{0.2cm}
    \caption{Test accuracy of different models on graph datasets. The values of the MP-I, MP-K, MP-L and P columns are as reported in \citep{Carriere_Multipers_Images}. P denotes $1$-parameter persistence as reported in ~\citep{Carriere_Multipers_Images}.}
    \label{tab:graph_acc}
\end{table}


\begin{table}[!htbp]
    \centering
    \begin{tabular}{@{}cccccc@{}}
\toprule
\textbf{Model}      & \textsc{Proteins}      & \textsc{Dhfr}          & \textsc{Cox2}          & \textsc{Mutag}         & \textsc{Imdb-Binary}  \\ \midrule
\textbf{GCN}        & $71.15 \pm 2.31$ & $78.70 \pm 2.35$ & $78.80 \pm 2.13$ & $88.26 \pm 3.70$ & $73.1 \pm 2.20$ \\ 
\textbf{GCN + GRIL} & \textcolor{red}{$74.21 \pm 2.08$} & $75.66\pm3.08$   & \textcolor{blue}{$80.30 \pm 1.57$} & \textcolor{blue}{$88.80 \pm 3.60$} & $72.6 \pm 1.46$ \\ 
\midrule
\textbf{GAT}        &     $67.66 \pm 3.92$  & $77.78 \pm 4.50$            &  $79.45 \pm 3.68$              &   $86.69 \pm 6.36$            &     \textcolor{red}{$74.90 \pm 2.98$}         \\ 
\textbf{GAT + GRIL} & $71.60 \pm 3.92$ &  \textcolor{blue}{$79.64 \pm 6.29$}   &    \textcolor{red}{$80.52 \pm 3.30$} &    $84.03 \pm 7.85$ & $71.60 \pm 3.04$         \\ 
\midrule
\textbf{GIN}        & $69.09 \pm 3.77$ & \textcolor{red}{$79.77 \pm 6.72$} & $78.80 \pm 4.88$ & $83.97 \pm 6.04$ & $73.7 \pm 3.34$ \\ 
\textbf{GIN + GRIL} & \textcolor{blue}{$71.87 \pm 3.22$} & $78.46\pm5.80$   & $79.22 \pm 4.89$ & \textcolor{red}{$89.32 \pm 4.81$} & \textcolor{blue}{$74.2 \pm 2.82$} \\ \bottomrule
\end{tabular}
\vspace*{0.2cm}
\caption{Performance comparison of baseline GNNs and \textsc{GRIL} augmented GNNs on graph benchmark datasets.}
    \label{tab:GNN_aug}
\end{table}
\begin{table}[!htb]
\centering
\begin{tabular}{@{}ccccc@{}}
\toprule
Model      & IMDB-BINARY               & IMDB-MULTI               & REDDIT-BINARY               & REDDIT-MULTI-5K             \\ \midrule
           & \multicolumn{2}{c}{$\text{initial\_node\_features: deg(v)}$} & \multicolumn{2}{c}{$\text{initial\_node\_features: uninformative}$} \\
    \midrule
GIN        & $73.70 \pm 3.34$          & $49.60 \pm 3.02$         & $90.30 \pm 1.30$           & $53.77 \pm 1.85$            \\
GIN + GRIL & $74.20 \pm 2.82$          & $50.33 \pm 2.58$         & $87.35 \pm 2.77$            & $53.85 \pm 2.60$ \\
\bottomrule
\end{tabular}
\vspace*{0.2cm}
\caption{Performance comparison of baseline GNNs and \textsc{GRIL} augmented GNNs on social network datasets without node attributes.}
\label{tab:uninformative}
\end{table}

\subsection{Graph Experiments} 
We perform a series of experiments on graph classification to test the proposed model. We use standard datasets such as \textsc{Proteins, Dhfr, Cox2, Imdb-Binary} and \textsc{Mutag}~\citep{TUDatasets}. A quantitative summary of these datasets is given in Appendix \ref{app:section:graph_exp}. 
\subsubsection{Classifying \gril{} representations directly}  We compare the performance of \gril{} with other models such as multiparameter persistence landscapes (MP-L)~\citep{Multipers_landscapes}, multiparameter persistence images (MP-I)~\citep{Carriere_Multipers_Images}, multiparameter persistence kernel (MP-K)~\citep{Multipers_Kernel_Kerber}.

In~\citep{Carriere_Multipers_Images}, the authors use the heat kernel signature (HKS) and Ricci curvature to form a bi-filtration on the graph datasets. We also use the same bi-filtration and report the result in Table \ref{tab:graph_acc}. We use XGBoost classifier~\citep{xgboost} as done in~\citep{Carriere_Multipers_Images} for a fair comparison. We also report the results of \gril{} with different classifiers in Table \ref{tab:diff_clfs}. 
The reported accuracies are averaged over 5 train/test splits of the datasets obtained with 5 stratified folds. The full details of the experiments are given in Appendix \ref{app:section:graph_exp}.

From Table \ref{tab:graph_acc}, we can see that the performance of \gril{} on \textsc{Imdb-Binary} is slightly lower than the other methods. This is because the graphs in \textsc{Imdb-Binary} do not contain many cycles and hence, there is not enough information to capture in $H_1$ (See Appendix~\ref{asec:viz} for a visual interpretation). However, when there is information available, \gril{} captures it better than the existing methods as can be seen from the accuracy values on other datasets.

\subsubsection{Augmenting GNNs with \gril{} features}
\paragraph{Experimental Setup.} In another set of experiments, we augment standard GNNs with \gril{} features and compare the performance of the model with the existing ones. We use $3$ layers of message-passing with hidden dimensionality of $64$. The latent node representations are passed through a pooling layer and a two layer MLP to obtain the final classification. We use sum pooling to maintain uniformity among experiments and we do not claim that this is the optimal choice in any sense. For the GNN+\gril{} architectures, we concatenate $H_0$ and $H_1$ and pass it through a $1$-layer MLP. We concatenate the transformed \gril{} values with the graph-level representations obtained from the pooling layer before passing through the final MLP classifier.
\vspace*{-0.3cm}
\paragraph{Training and evaluation.} The models are trained for $100$ epochs with ADAM as the optimizer. The initial learning rate was set to be $10^{-2}$ halving every $20$ epochs. No hyperparameter tuning and early stopping was done. Though restrictive for practical scenarios, we follow earlier works (see \citep{morris2019weisfeiler},\citep{zhang2018end} for more details). We report cross-validation accuracy averaged over $10$ folds of the model obtained in the final training epoch.
\vspace*{-0.3cm}

\paragraph{Results.}We can see from Table~\ref{tab:GNN_aug} that \gril{} captures topological information that the GNN architectures are unable to capture and hence we see a clear increase in performance. However this is not the case for social network datasets. For the experiments reported in table~\ref{tab:uninformative} the node features are set as \textit{uninformative} following the settings of~\citep{gin_xu2018how}. For the \textsc{Imdb-*, Reddit-Multi-5K} datasets, the augmented \gril{} features improve the baseline GIN accuracy. For the \textsc{Reddit-Binary} dataset, since the graphs are highly sparse \gril{} features computed with HKS-RC bifiltration fails to capture important features and as a consequence, the performance decreases.

\section{Conclusion}
In this work, we propose \gril{}, a $2$-parameter persistence vectorization based on generalized rank invariant that we show is Lipschitz continuous and differentiable with respect to the bi-filtration functions. 
Further, we present an algorithm for computing \gril{} which is a synergistic confluence of the recent developments in computing generalized rank invariant of a $2$-parameter module and an efficient algorithm for computing zigzag persistence.
As a topological feature extractor, \gril{} performs better than Graph Convolutional Networks (GCNs) and Graph Isomorphism Networks (GINs) on our synthetic dataset. 
It also
performs better than the existing multiparameter persistence methods on some graph benchmark datasets while achieves comparable performance on others. 

We believe that the additional topological information that a $2$-parameter persistence module encodes, as compared to a $1$-parameter persistence module, can be leveraged to learn better representations. Further directions of research include using \gril{} with GNNs for filtration learning to learn more powerful representations. We expect that this work motivates further research in this direction.

\section{Acknowledgement}
This research is partially supported by NSF grant CCF 2049010.

\newpage
\bibliographystyle{iclr2023_conference}
\bibliography{ref}

\newpage
\appendix

\section{Algorithm}
\label{sec:algorithm}
Here, we describe the algorithm in detail. In practice, we are usually presented with 
a piecewise linear (PL) approximation $\hat{f}$ of a $\RR^2$-valued function $f$
on a discretized domain such as a finite simplicial complex. The PL-approximation $\hat{f}$ itself is $\RR^2$-valued. 
Discretizing the parameter
space $\RR^2$ by a grid, we consider a \emph{lower star}
bi-filtration of the simplicial complex.
Analogous to the
$1$-parameter case, a lower star bi-filtration is obtained by
assigning every simplex the maximum
of the values over all of its vertices in each of the two co-ordinates.
With appropriate scaling, these (finite) values can be mapped to a subset of points in a uniform finite grid over $[0,1]\times [0,1]$.
Observe that because of the maximization of values over all vertices,
we have the property that two simplices $\sigma\subseteq \tau$ 
have values $\hat{f}(\sigma)\in \RR^2$ and $\hat{f}(\tau)\in \RR^2$ where
$\hat{f}(\sigma)\leq \hat{f}(\tau)$.
A partial order of the simplices according to these values 
provide a bi-filtration over the grid $[0,1]\times [0,1]$.

{\bf Computing generalized ranks.} We need to compute the generalized rank $\RKG^M(\widehat{\boxed{\vp}}^\ell_{d})$ for
every worm $\widehat{\boxed{\vp}}^\ell_d$ to decide whether to increase its width or not. We use a result of~\citep{DKM22} to compute $\RKG^M(\widehat{\boxed{\vp}}^\ell_{d})$. It says that $\RKG^M(\widehat{\boxed{\vp}}^\ell_{d})$ can be
computed by considering a zigzag module and computing
the number of full bars (bars that begin at the start of the zigzag filtration and persist until the end of the filtration)  in its decomposition. This zigzag
module decomposition can be obtained by restricting
the bi-filtration on the boundary of $\RKG^M(\widehat{\boxed{\vp}}^\ell_{d})$ and using
any of the zigzag persistence algorithms on the resulting
zigzag filtration. We use the recently published efficient algorithm and its associated software~\citep{dey2022fast} for computing zigzag persistence.\\

{\bf Computing the value of \gril{} using binary search.} For a worm $\widehat{\boxed{\vp}}^\ell_{d}$ and a given $k\geq 1$, we apply binary search to compute the value of \gril{}. 

Let us denote the grid resolution by $\rho$. We do the binary search for $d$ in the range $[d_{\min}, d_{\max}]$ where $d_{\min}=\rho$  and $d_{\max}=1$. In each iteration, we compute $\RKG^M(\widehat{\boxed{\vp}}^\ell_d)$ for $d=(d_{\min}+d_{\max})/2$ and check if $\RKG^M(\widehat{\boxed{\vp}}^\ell_d)\geq k$. We increase the width of the worm by updating 

$d_{\min}$ to be $d+\rho$ if $\RKG^M(\widehat{\boxed{\vp}}^\ell_d)\geq k$. Otherwise, we decrease the width of the worm by updating $d_{\max}$ to be $d-\rho$. The binary search stops and returns $d$ when $d_{\max} < d_{\min}$. This ensures that we have searched through all possible values of $d$ for which $\RKG^M(\widehat{\boxed{\vp}}^\ell_d) \geq k$ and returned the maximum of these values.

\begin{figure}[thbp]
\begin{center}
    \includegraphics[width=0.7\textwidth]{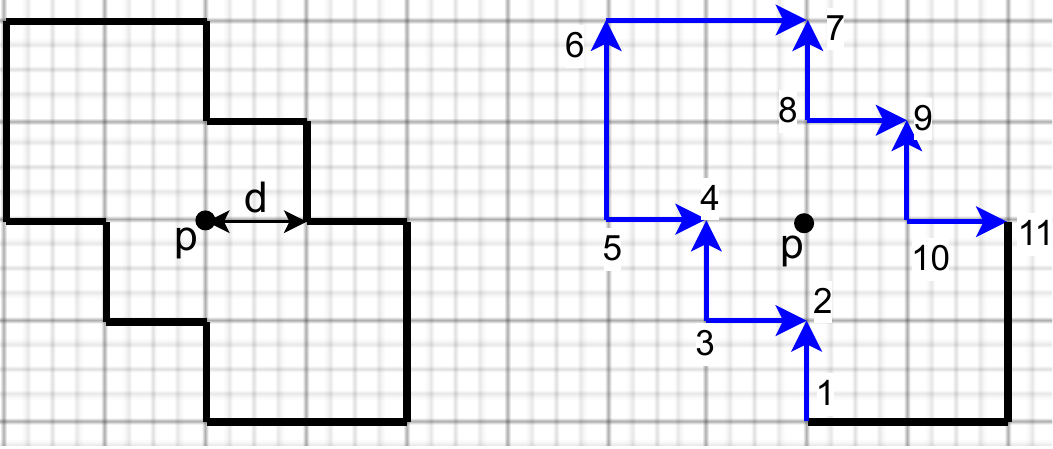}
    \caption{(Left) The figure shows the $2$-worm centered at $p$ with width $d$. (Right) The highlighted part denotes the boundary cap of the worm. The arrows in the figure denote the direction of arrows in the zigzag filtration.}
    \label{fig:worm_bdry}
\end{center}
\end{figure}

Refer to Figure \ref{fig:worm_bdry} for an illustration of the zigzag filtration along the boundary cap of a $2$-worm.

\section{Experimental Setup}
\label{app:experiment_setup}
\subsection{Hourglass Dataset}\label{asec:toydata_exp}
The two traversals $T_1$ and $T_2$ are designed as follows: $T_1$ traverses $G_1$, then followed by $G_2$; $T_2$ traverses upper halves $G^\top_1\subseteq G_1$ and $G^\top_2\subseteq G_2$ sequentially first, then followed by the other halves $G^\bot_1\subseteq G_1$ and $G^\bot_2\subseteq G_2$.
For cross edges, we randomly pick $2|V|$ pairs of nodes (with replacement) in $G^\top_{1}\times G^\bot_{2}$ on which we place cross edges. We don't place multiple edges on the same pair of nodes. In a similar way we place cross edges on $G^\bot_{1}\times G^\top_{2}$. Therefore, $G$ has roughly $6|V|$ cross edges between $G_1$ and $G_2$. The (roughly) total number of edges: $|E|\approx 5|V|$. For methods based on persistence modules, we take two filtration functions $f_1, f_2:V\cup E\to \RR$ on $G$ as follows: 
let $x(v)$ be the node attribute on $v$ given by the order index of the trace. Then
\begin{itemize}
    \item $f_1$ is given by $\forall v\in V, f_1(v)=x(v)$ and $\forall e=(v,w)\in E, f_1(e)=\max(x(v), x(w))$. 
    \item $f_2$ is given by $f_2(v)=0$ and $f_2=C(e)$ where $C(e)$ is a curvature value of $e$. Here we use a version of discrete Ricci called Forman-Ricci curvature~\citep{forman_ricci2003bochner} computed by the code provided in~\citep{ricci_flow}.
\end{itemize} 
We compute for all points $\vp$ in a uniform $4\times 4$ grid the {\gril{}} values $\lambda(\vp, k, \ell)$ 
for generalized rank $k = 1, 2$, worm size $\ell = 2$, and homology of dimension $0$ and $1$.
Therefore, for each graph our {$\Lambda_\vp^{k,\ell}$} generates a $64$-dimensional vector as representation. 
For the method based on $1$-parameter persistence modules with persistence image vectorization, we compute $1$-parameter persistence modules for homology dimension $0,1$ on $f_1$ and $f_2$ independently. Each persistence module will be vectorized on a $4\times 4$ grid. Therefore, it also produces a $64$-dimensional vector as representation. 


\subsection{Graph Experiments} 
\label{app:section:graph_exp}
We performed a series of experiments on graph classification using \gril{}. We used standard datasets with node features such as \textsc{Proteins, Dhfr, Cox2, Mutag} and \textsc{Imdb-Binary} \citep{TUDatasets}. Description of the graph classification tasks is given in Table \ref{tab:app:graph_data}.

\begin{table}[h!]
    \centering
    \caption{Description of Graph Datasets}
    \label{tab:app:graph_data}
    \vspace{5mm}
    \resizebox{0.7\columnwidth}{!}{
    \begin{tabular}{|c|c|c|c|c|}
    \hline
         \textbf{Dataset} & \textbf{Num Graphs} & \textbf{Num Classes} & \textbf{Avg. No. Nodes} & \textbf{Avg. No. Edges} \\
         \hline
         {\sc Proteins}& 1113 & 2 & 39.06 & 72.82 \\
         {\sc Cox2} & 467 & 2 & 41.22 & 43.45 \\
         {\sc Dhfr} & 756 & 2 & 42.43 & 44.54 \\
         {\sc Mutag} & 188 & 2 & 17.93 & 19.79 \\
         {\sc Imdb-Binary} & 1000 & 2 & 19.77 & 96.33\\
         \hline
    \end{tabular}}
    
\end{table}

The Heat Kernel Signature-Ricci Curvature bi-filtration, as done in \citep{Carriere_Multipers_Images}, values are normalized so that they lie between $0$ and $1$. For the experiments reported in Section \ref{sec:experiment}, we fix the grid resolution $\rho = 0.01$. Thus, the square $[0,1] \times [0,1]$ has $100 \times 100$ many grid points. We sample a uniform subgrid of center points, $\vp$, out of these grid points. We fix $l = 2$ for our experiments. We compute $\lambda(\vp, k, \ell)$ where $\vp$ varies over the sampled center points and $k$ varies from $1$ to $5$. Each such computation is done for dimension $0$ homology ($H_0$) and dimension $1$ homology ($H_1$). We use XGBoost~\citep{xgboost} classifier for these experiments.

\subsubsection{Ablation Studies.} 
We have performed experiments with different subgrid sizes and the results are reported in Table \ref{tab:grid_size}. The reported accuracies are averaged over 5 train/test splits of the datasets obtained with 5 stratified folds. We can see from the table that for different datasets, different subgrid sizes give the best results. This can be attributed to the fact that for some datasets, topological information needs to be captured at a finer level while for other datasets, capturing such finer details can be redundant. 

\begin{table}[!htb]
    \centering
    \begin{tabular}{@{}ccccc@{}}
        \toprule
         \textbf{Grid Size} & $\mathbf{50\times50}$ & $\mathbf{25\times 25}$ & $\mathbf{10\times 10}$ & $\mathbf{5\times 5}$  \\
         \midrule
         \textsc{Proteins} & $70.8 \pm 2.7$ & $70.2\pm 1.8$ & $69.8\pm 2.4$ & $68.5 \pm 2.6$  \\
         \textsc{Dhfr} & $77.6\pm 2.5$ & $77.2\pm3.4$ & $77.5\pm 3.5$ & $77.5 \pm 3.5$ \\
         \textsc{Cox2} & $79.8 \pm 3.0$ & $78.9 \pm 2.4$ & $79.8 \pm 2.9$ & $78.9 \pm 3.5$ \\
         \textsc{Mutag} & $87.3\pm3.8$ & $87.8\pm 4.2$ & $87.8\pm4.5$ & $86.8\pm 3.3$ \\
         \textsc{Imdb-Binary} & $62.2 \pm 4.3$ & $65.2 \pm 2.6$ & $62.2 \pm 2.3$ & $63.5 \pm 3.2$ \\
         \bottomrule
    \end{tabular}
    \vspace*{0.2cm}
    \caption{Test accuracies of \gril{} on subgrids of different sizes.}
\label{tab:grid_size}
\end{table}

We report the computation times of \gril{} for these datasets in Table \ref{tab:comp_times}. The values denote the total computation time for all the center points on a $50\times 50$ subgrid for a $2$-worm. The computations were done on a Intel(R) Xeon(R) Gold 6248R CPU machine and the computation was carried out on 32 cores.

\begin{table}[!htb]
    \centering
    
    \begin{tabular}{@{}cc@{}}
    \toprule
         \textbf{Dataset} & \textbf{Computation time}\\
         \midrule
         \textsc{Proteins} & $6$ hr $13$ min $38$ s \\
         \textsc{Dhfr} & $4$ hr $15$ min $54$ s \\
         \textsc{Cox2} & $2$ hr $44$ min $23$ s \\
         \textsc{Mutag} & $0$ hr $56$ min $48$ s \\
         \textsc{Imdb-Binary} & $4$ hr $03$ min $35$ s \\
         \bottomrule
    \end{tabular}
    \vspace*{0.2cm}
    \caption{Computation times for \gril{} for each dataset with a $2$-worm and $50 \times 50$ subgrid. }
    \label{tab:comp_times}
\end{table}

In Table \ref{tab:grid_res_acc}, we show the performance of \gril{} with different grid resolutions ($\rho$) and $\ell$-worms. For these experiments, we used a $50\times 50$ subgrid for the center points. The reported accuracies are averaged over 5 train/test splits of the datasets obtained with 5 stratified folds. We test it on \textsc{Mutag} and \textsc{Cox2} and we can see that for $\rho=0.01$, we get the highest accuracy of the model on both the datasets. We can see from the table that there is an improvement in accuracy from $\ell=1$ to $\ell=2$. However, there is no significant improvement from $\ell=2$ to $\ell=3$.

\begin{table*}[!htb]
    \centering
    \begin{tabular}{@{}ccccccc@{}}
        \toprule
        \textbf{Dataset} & $\mathbf{\rho = 0.02}$ & $\mathbf{\rho = 0.01}$ & $\mathbf{\rho = 0.005}$ & $\mathbf{\ell = 1}$ & $\mathbf{\ell = 2}$ & $\mathbf{\ell = 3}$\\ 
        \midrule
        \textsc{Mutag} & 86.3 $\pm$ 4.2 & 87.8 $\pm$ 4.5 & 85.2 $\pm$ 3.9 & 85.7 $\pm$ 4.2 & 87.8 $\pm$ 4.5 & 87.8 $\pm$ 3.9 \\ 
        \textsc{Cox2} & 78.2 $\pm$ 1.7 & 79.8 $\pm$ 2.9 & 77.8 $\pm$ 1.4 & 79.3 $\pm$ 2.9 & 79.8 $\pm$ 2.9 & 78.9 $\pm$ 3.5\\
        \bottomrule
    \end{tabular}
    \caption{Test accuracy for different grid resolutions and for $\ell$-worms with different  values of $\ell$.}
\label{tab:grid_res_acc}
\end{table*}

         
    

\begin{table}[!ht]
    \centering
    \begin{tabular}{@{}ccccc@{}}
        \toprule
         \textbf{Dataset} & \textbf{SVM} & \textbf{LR} & \textbf{XGBoost} & \textbf{3-MLP} \\
         \midrule
         \textsc{Proteins} & $73.3 \pm 1.5$ & $72.7\pm 2.6$ & $70.9 \pm 3.1$ & $71.3 \pm 2.1$ \\
         \textsc{Dhfr} & $61.7\pm 0.4$ & $77.8\pm1.9$ & $77.6 \pm 2.5$ & $72.3 \pm 4.3$ \\
         \textsc{Cox2} & $77.2 \pm 0.8$ & $78.5 \pm 2.5$ & $79.8 \pm 2.9$ & $77.0 \pm 1.2$ \\
         \textsc{Mutag} & $80.0\pm3.9$ & $86.3\pm 3.8$ & $87.8\pm 4.2$ & $76.8 \pm 9.1$ \\
         \textsc{Imdb-Binary} & $65.1 \pm 3.6$ & $63.2 \pm 2.1$ & $65.2 \pm 2.6$ & $61.2 \pm 6.6$ \\
         \bottomrule
    \end{tabular}
    \vspace*{0.2cm}
    \caption{Test accuracies of \gril{} using different classifiers.}
    \label{tab:diff_clfs}
\end{table}
In Table \ref{tab:diff_clfs}, we report the performance of \gril{} on graph benchmark datasets with different classifiers such as Support Vector Machine (SVM)~\citep{SVM, LIBSVM}, Logistic Regression (LR)~\citep{LIBLINEAR}, Multilayer Perceptron (3-MLP) implemented using \emph{scikit-learn}~\citep{sklearn_api} library.
The reported accuracies are averaged over 5 train/test splits of the datasets obtained with 5 stratified folds.

\section{Differentiability of {\gril{}}: proof and experiment}\label{asec:diff}

\begin{propositionof}{\ref{thm:derivative}}
    Consider the space of all filtration functions $\{f:\gX\to \RR^2\}$ on a finite space $\gX$ with $|\gX|=n$, which is equivalent to $\RR^{2n}$. 
   
    For fixed $k, \ell, \vp$, 
   there exists a measure-zero subset $Z\subseteq \RR^{2n}$ such that for any $f\in \RR^{2n}\setminus Z$ satisfying the following generic condition: $\forall x\neq y\in \gX, f(x)_1\neq f(y)_1, f(x)_2\neq f(y)_2$, there exists an assignment $s:\gX\to \{\pm 1,0,\pm\ell\}^2$ such that 
    \begin{align*}
        \nabla_s \Lambda_\vp^{k,\ell}(f)\triangleq & \lim_{\alpha\to 0}\frac{\Lambda_\vp^{k,\ell}(f+\alpha s)-\Lambda_\vp^{k,\ell}(f)}{\alpha\|s\|_{\infty}}\\
        = & \max_{g\in\gX}\nabla_g\Lambda_\vp^{k,\ell}(f).
    \end{align*}
\end{propositionof}

    \begin{proof}
    By Corollary~\ref{cor:gril_differentiable} we know there exists some measure-zero set $R\subset\RR^{2n}$ such that $\Lambda_\vp^{k,\ell}$ is differentiable in $\bar{R}\triangleq \RR^{2n}\setminus R$.
    Let $M=M^f$ be a $2$-parameter persistence module induced from some generic filtration function $f\in \bar{R}$ and $I=\boxed{\vp}^{\ell}_d$ be an $\ell$-worm in $\RR^2$ centered at some point $\vp$. Let $\partial(I)$ be the boundary of $I$ excluding the right most vertical edge and bottom most horizontal edge (See Figure~\ref{fig:worm_expand} as an illustration). It is shown in~\citep{DKM22} that, over the boundary $\partial(I)$, a  zigzag persistence module can be defined by restricting $M$ to $\partial(I)$ (in practice it is enough to take a zigzag path to approximate the smooth off-diagonal boundary) on which the number of full bars is equal to $\RKG^M(I)$. Let $I'=\boxed{\vp}^\ell_{d'}$ be another $\ell$-worm centered at $\vp$ for some $d'\neq d$. One can observe that, if the zigzag filtrations on $\partial(I)$ and $\partial(I')$ have the same order of insertion and deletion of simplices, then the number of full bars on $M|_{\partial(I)}$ and $M|_{\partial(I')}$ are the same, which means $\RKG^M(I)=\RKG^M(I')$. Now let $d=\lambda^{M}(k,\ell, \vp), I=\boxed{\vp}^\ell_d, I_-=\boxed{\vp}^\ell_{d-\varepsilon}, I_+=\boxed{\vp}^\ell_{d+\varepsilon}$ for some small enough $\varepsilon$. 
Based on the definition of $\lambda^{M}$, we know that $\RKG^M(I_-)\geq k$ and $\RKG^M(I_+)<k$, which means that zigzag filtrations change on some simplices while moving from $\partial(I_-)$ to $\partial(I_+)$. Either the collection of simplices changes or the order of simplices changes. The former case corresponds to the simplices with $x$ or $y$-coordinate aligned with some vertical or horizontal edges on $\partial(I)$. The latter case corresponds to those pairs of simplices $(\sigma, \tau)$ such that $f(\sigma)\vee f(\tau)\triangleq (\max(f(\sigma)_1, f(\tau)_1), \max(f(\sigma)_2, f(\tau)_2)$ is on some off-diagonal edges on $\partial(I)$. By the generic condition of the filtration function $f$, we can locate those simplices as the set $S$, which we call support simplices. The assignment function $s$ is defined on each $\sigma\in S$ by assigning $s(\sigma)=\pm 1$ or $\pm\ell$ which is consistent with the moving direction of the edge from $\partial(I)$ to $\partial(I_+)$. We discuss the assignment values case by case:

We can divide the boundary into four edges: bottom (off-diagonal) edge $e_b$, top (horizontal) edge $e_t$, left (vertical) edge $e_l$, right (off-diagonal) edge $e_r$.
\begin{enumerate}
    \item $ s(\sigma)=(0, +\ell)$ if $\sigma$ has $y$-coordinate the same as $e_t$, 
    \item $ s(\sigma)=(-\ell, 0)$ if $\sigma$ has $x$-coordinate the same as $e_l$, 
    \item $ s(\sigma)=(0,-1), s(\tau)=(-1, 0)$ if $f(\sigma)\vee f(\tau)$ is on $e_b$ and $f(\sigma)_1\leq f(\tau)_1$, 
    \item $ s(\sigma)=(0,+1), s(\tau)=(+1, 0)$ if $f(\sigma)\vee f(\tau)$ is on $e_r$ and $f(\sigma)_1\leq f(\tau)_1$,     
\end{enumerate}
See Figure~\ref{fig:worm_expand} as an illustration. 
We assume $f$ satisfies the condition that the supporting simplices in $S$ either all belong to cases $1$ and $2$ or all belong to cases $3$ and $4$, but not a combination of them. It is not hard to see that the collection of $f$ for which this condition does not hold is a measure zero set in $\RR^{2n}$. Let us denote the collection of all such $f$'s by $F$. Then, $Z = F \cup R$ is a measure zero set in $\RR^{2n}$ which consists of $f$'s which do not satisfy the condition and those points where $\Lambda_\vp^{k,\ell}${} is not differentiable. 

Now, check for such a generic $f\notin Z$ so that the directional derivative $\nabla_s\lambda(f)$ is indeed a maximal directional derivative. 
For the cases $3$ and $4$,
the stability property in Proposition~\ref{prop:stability} implies that, for any $\alpha>0$  and any direction vector $g\in \Real^{2n}$ with $\|g\|_\infty=1$, we have $\lambda(f+\alpha g)-\lambda(f)\leq \alpha$. Also it is not hard to check that 
$\lambda(f+\alpha s)-\lambda(f)=\alpha$ for $\alpha>0$ small enough 
since the zigzag persistence of $M^{f+\alpha s}|_J$ with $J=\boxed{\vp}^\ell_{d+\alpha}$ has the same collection of simplices and orders as $M^f|_{I}$ with $I=\boxed{\vp}^\ell_{d}$, which means they have the same rank. Therefore, we have $\forall \|g\|_\infty=1, \lambda(f+\alpha g)-\lambda(f)\leq \lambda(f+\alpha s)-\lambda(f) \implies \nabla_g\Lambda(f)\leq \nabla_s\Lambda(f)$. 
For the case $1$ (the case $2$ is similar), the support simplex is on edge $e_t$. Now for any direction vector $g\in \RR^{2n}$ and $\alpha>0$  small enough, let $\Delta d=\Lambda(f+\alpha g)-\Lambda(f)$ and let $\Delta y_{e_t}$ be the difference between $y$-coordinates of $e_t$'s from $\boxed{\vp}^\ell_d$ and  $\boxed{\vp}^\ell_{d+\Delta d}$. Note that $\frac{\Delta d}{\Delta y_{e_t}}=\ell$ and 
$\frac{|\Lambda(f+\alpha g)-\Lambda(f)|}{\alpha \|g\|_{\infty}}\leq \frac{\Delta d}{\Delta y_{e_t}}$ since in order to change $\Lambda(f)$ by $\Delta d$ one has to at least move edge $e_t$ by $\Delta y_{e_t}$, which correspondingly changes the $y$-coordinate of $s(\sigma)$ by $\Delta y_{e_t}$. From the above argument, we can get 
the directional derivative $\nabla_g\Lambda(f)$ is bounded from above by the ratio $\frac{\Delta d}{\Delta y_{e_t}}=\frac{1}{\ell}=\nabla_s\Lambda(f)$. 
The case for $\alpha < 0$ is symmetric. 

\begin{figure}[htb]
    \centering
    \includegraphics[width=0.7\textwidth,page=1]{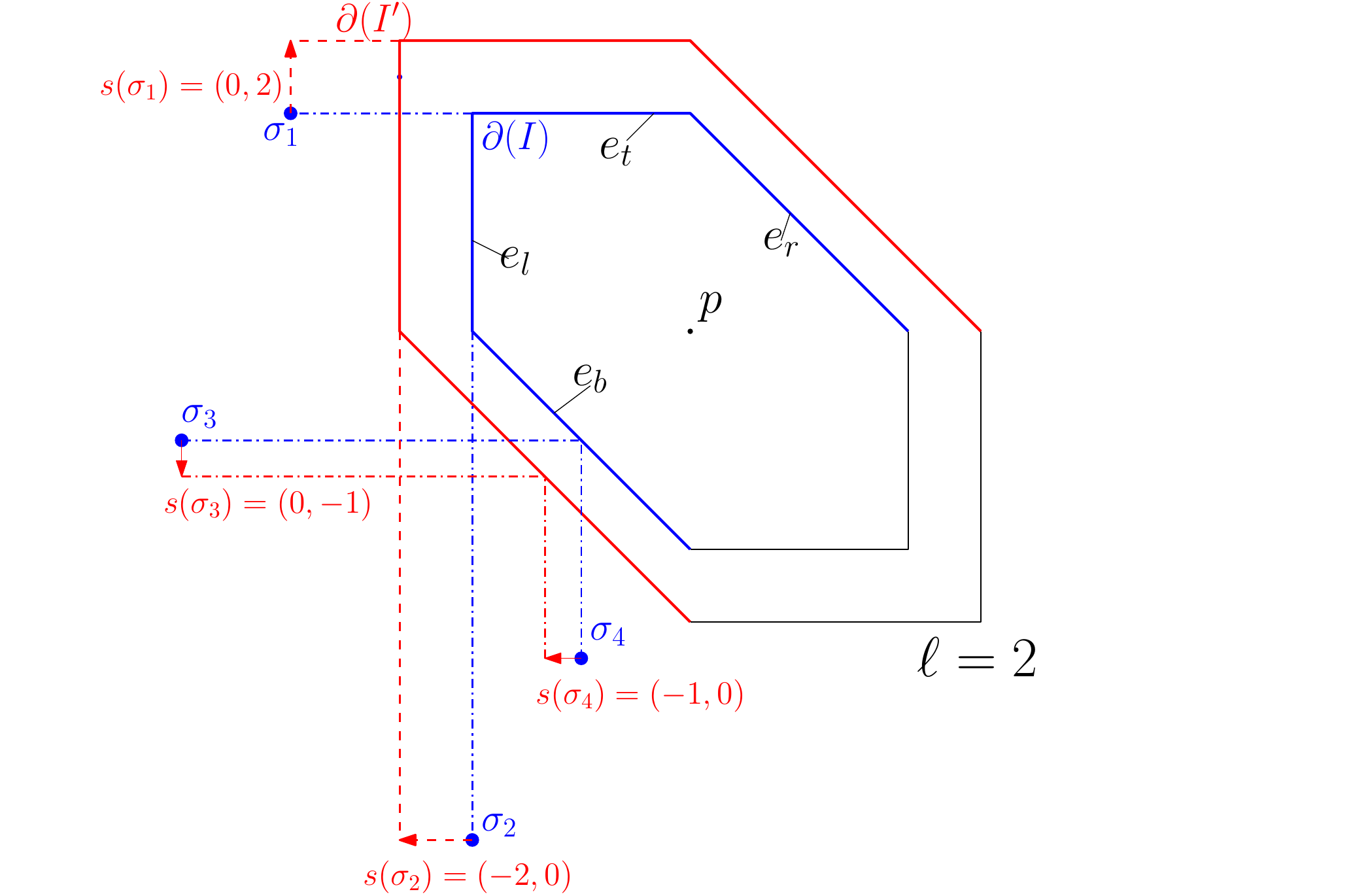}
    \caption{Two examples of $2$-worm $I, I'$. Blue and red lines are boundaries of $I$ and $I'$ respectively on which the zigzag persistence modules are constructed for computing ranks. $\sigma_i, i=1,2,3,4$ are four support simplices on $\partial(I)$. $s(\sigma_i)$ is the assignment function values on $\sigma_i$. }
    \label{fig:worm_expand}
\end{figure}

In summary, $\nabla_s\lambda(f)$ indeed maximizes the directional derivative for $f$. 
\end{proof}

The proof of Theorem~\ref{thm:derivative} also shows how to find the assignment $s$ with the corresponding set of \emph{supporting} simplices. This result enables us to update the filtration function of the simplices according to some target function based on $\Lambda_\vp^{k,\ell}$. Here we introduce an experiment, as a proof of concept, to show how one can use {\gril{}} as a machine learning model to enhance topological features.
By giving a suitable target function, our model is trained to rearrange the positions of input points to
better represent circles.
The experiment results is shown in Figure \ref{fig:two_circles}.
The input to $\Lambda_\vp^{k,\ell}$ is points sampled non-uniformly from two circles. Recall that \gril{} is defined over a $2$-parameter persistence module induced by some filtration function $f = (f_x, f_y)$. For every vertex $v$, we assign $f_x(v) = 1 - \exp(\frac{1}{\alpha}\sum_{i=1}^{\alpha} d(v, v_i))$, where $v_i$ denotes $i$-th nearest neighbor of the vertex $v$ and $d(v, v_i)$ denotes the distance between $v$ and $v_i$. 
For our experiments we fix $\alpha=5$. We set $f_y(v) = 0$. We compute \textsc{AlphaComplex} filtration~\citep{edelsbrunner2010computational} of the points and for each edge $e:=(u,v)$ we assign $f_x(e) = \max (f_x(u), f_x(v))$ and $f_y(e) = 1 - \exp(d(u,v))$. To obtain a valid bi-filtration function on the simplicial complex we extend the bi-filtration function from $1$-simplices to $2$-simplices, i.e. triangles. We pass $f$ as an input to $\Lambda_\vp^{k,\ell}$, coded with the framework \textsc{PyTorch}~\citep{pytorch}, that computes persistence landscapes. $\Lambda_\vp^{k,\ell}$ uniformly samples $n$ center points from the grid $[0, 1]^2$. 
Since \gril{} value computation can be done independently for each $k$ and a center point, we take advantage of parallel computation and implement the code in a parallel manner. In the forward pass we get \gril{} values 
$\lambda(\vp, k, \ell)$ for generalized rank $k = 1, 2$, worm size $\ell = 2$ and homology of dimension $1$ while varying $\vp$ over all the sampled center points. After we get the \gril{} values, we compute the assignment $s$ according to Theorem~\ref{thm:derivative}. During the backward pass, we utilize this assignment to compute the derivative of {$\Lambda_\vp^{k,\ell}$} with respect to the filtration function and consequently update it. We get $n$ values of $\lambda(\cdot, 1,2)$ for $n$ center points. We treat these $n$ values as a vector and denote is as $\lambda_1$. Similarly, we use $\lambda_2$ to denote the vector formed by values $\lambda(\cdot, 2, 2)$. 
We minimize the loss  $L=- (\|\lambda_{1}\|_2^2 + \|\lambda_{2}\|_2^2)$.  Figure \ref{fig:two_circles} shows the result after running {$\Lambda_\vp^{k,\ell}$} for $200$ epochs. The optimizer we use to optimize the loss function is Adam~\citep{AdamKingma} with a learning rate of $0.01$.
\begin{figure}[!hbt]
    \centering
    \includegraphics[scale=0.8]{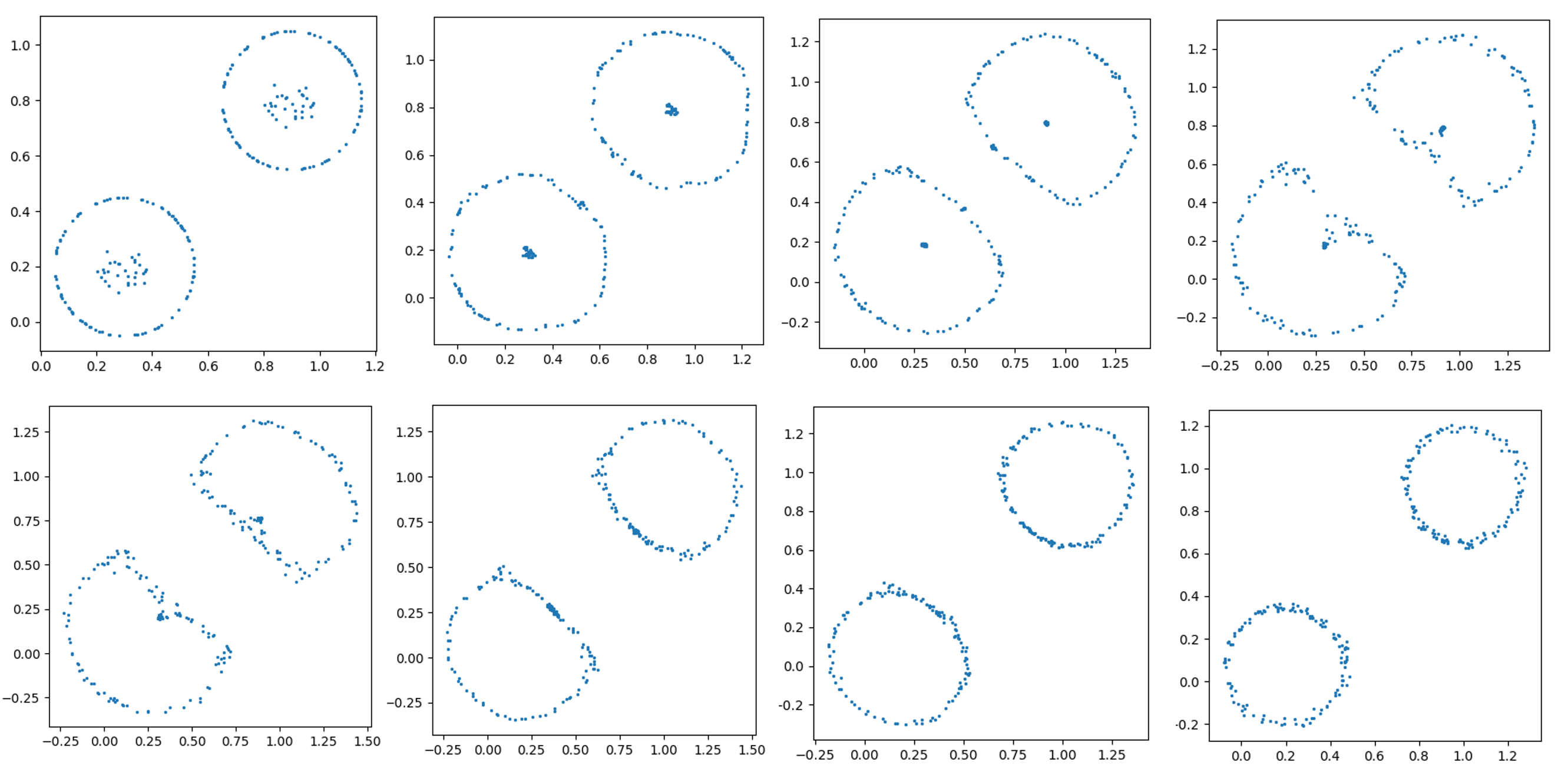}
    \caption{The figures show the rearrangement of points according to the loss function, which in our case is increasing the norm of $\lambda_1$ and $\lambda_2$ vectors. We start with two circles containing some noisy points inside. We observe that the points rearrange to form two circles because that increases the norm of $\lambda_1$ and $\lambda_2$ vectors.}
    \label{fig:two_circles}
\end{figure}
\newpage
\section{Visualization of \gril{} for graph datasets}\label{asec:viz}
The plot for first $5$ \gril{} values are shown in Figure~\ref{afig:sample_landscapes}. The figure contains landscape values for $5$ random graph samples of each dataset. In Figure~\ref{afig:sample_eigenvec}, we plot the first two eigen vectors given by principal component analysis (PCA) of the computed \gril{} values for each dataset. Plots for $H_0$ and $H_1$ are shown separately.
\begin{figure}[!htb]
    \centering
    \includegraphics[width=0.95\linewidth]{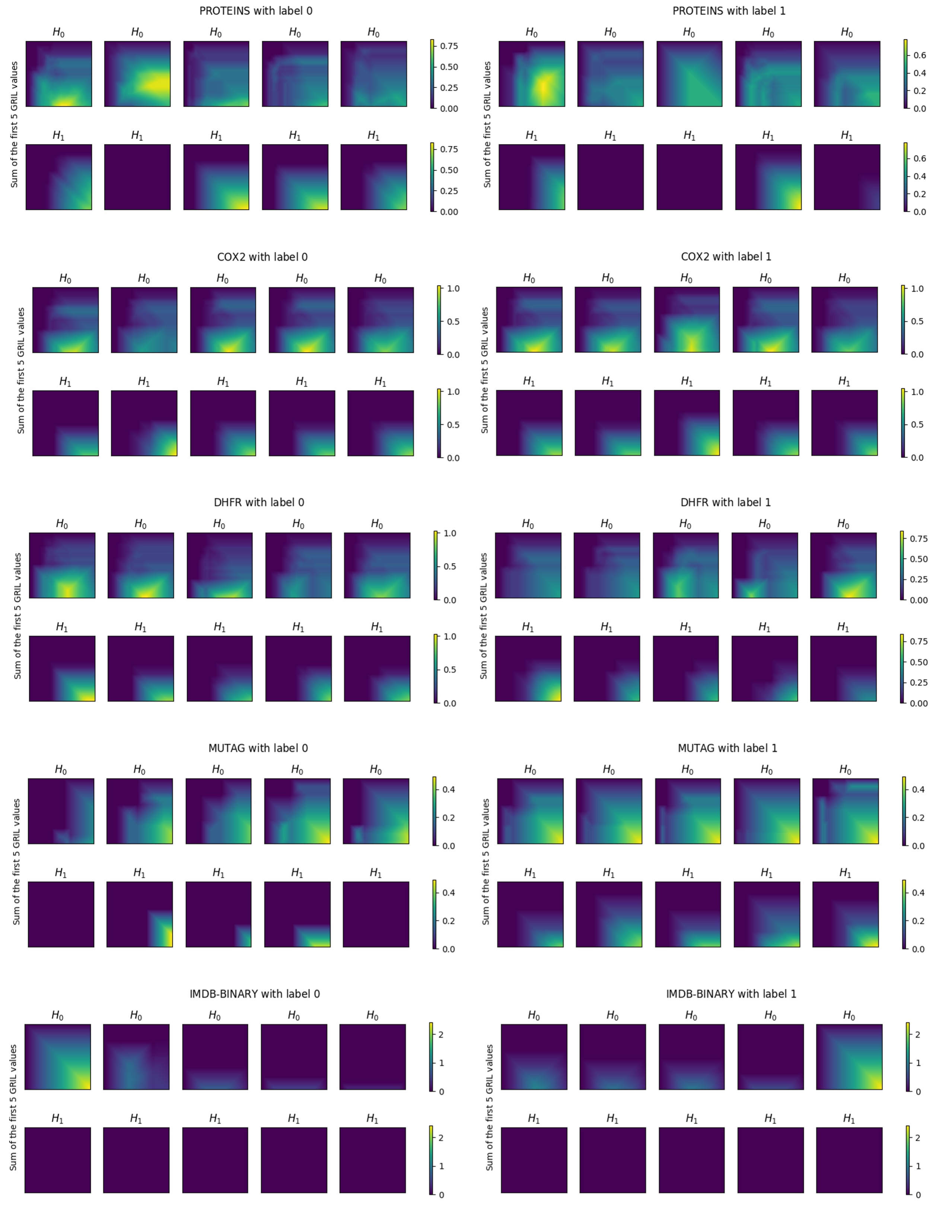}
    \caption{\gril{} of $5$ random graph samples of each dataset. \gril{} values of $H_0$ and $H_1$ are shown separately columnwise.}
    \label{afig:sample_landscapes}
\end{figure}

\begin{figure}[!htb]
    \centering
    \includegraphics[width=0.95\linewidth]{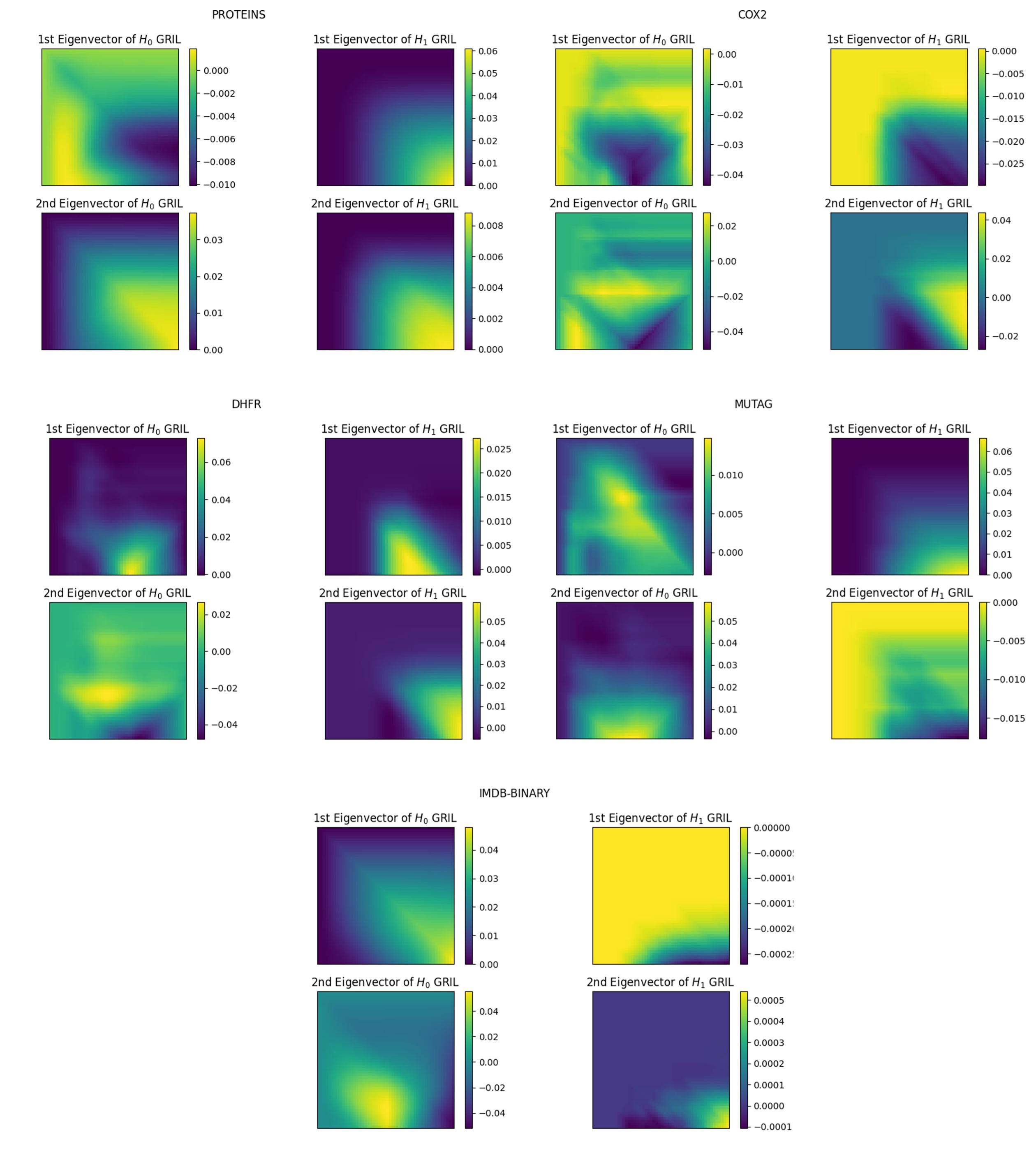}
    \caption{Plot of the first two eigen vectors given by PCA on the entire dataset for $H_0$ and $H_1$ respectively.}
    \label{afig:sample_eigenvec}
\end{figure}
\end{document}